\pgfplotsset{compat=1.18} 
\newtheorem{definition}{Definition}
\newtheorem{theorem}{Theorem}
\newtheorem{assumption}{Assumption}
\newtheorem{lemma}{Lemma}
\title{HFedATM: Hierarchical Federated Domain Generalization via\\Optimal Transport and Regularized Mean Aggregation}
\author{
    Thinh Nguyen\textsuperscript{\rm 1, 2},
    Trung Phan\textsuperscript{\rm 2},
    Binh T. Nguyen\textsuperscript{\rm 3},
    Khoa D Doan\textsuperscript{\rm 1, 2},
    Kok-Seng Wong\textsuperscript{\rm 1, 2}\footnote{Corresponding author.}
}
\begin{document}

\maketitle

\begin{abstract}
Federated Learning (FL) is a decentralized approach where multiple clients collaboratively train a shared global model without sharing their raw data. Despite its effectiveness, conventional FL faces scalability challenges due to excessive computational and communication demands placed on a single central server as the number of participating devices grows. Hierarchical Federated Learning (HFL) addresses these issues by distributing model aggregation tasks across intermediate nodes (stations), thereby enhancing system scalability and robustness against single points of failure. However, HFL still suffers from a critical yet often overlooked limitation: domain shift, where data distributions vary significantly across different clients and stations, reducing model performance on unseen target domains. While Federated Domain Generalization (FedDG) methods have emerged to improve robustness to domain shifts, their integration into HFL frameworks remains largely unexplored. In this paper, we formally introduce Hierarchical Federated Domain Generalization (HFedDG), a novel scenario designed to investigate domain shift within hierarchical architectures. Specifically, we propose HFedATM, a hierarchical aggregation method that first aligns the convolutional filters of models from different stations through Filter-wise Optimal Transport Alignment and subsequently merges aligned models using a Shrinkage-aware Regularized Mean Aggregation. Our extensive experimental evaluations demonstrate that HFedATM significantly boosts the performance of existing FedDG baselines across multiple datasets and maintains computational and communication efficiency. Moreover, theoretical analyses indicate that HFedATM achieves tighter generalization error bounds compared to standard hierarchical averaging, resulting in faster convergence and stable training behavior.
\end{abstract}


\section{Introduction}

Artificial intelligence (AI) has profoundly reshaped numerous sectors, from healthcare diagnostics to autonomous transportation, thanks to rapid advancements in machine learning~\cite{shang2024impact,baydoun2024artificial,atakishiyev2024explainable}. Nevertheless, real-world AI deployments must navigate persistent challenges, particularly data privacy, communication efficiency, and domain shift~\cite{mcmahan2017communication,kairouz2021advances}. As Federated Learning (FL) scales to a large number of clients, a single central server becomes a computational and communication bottleneck. To mitigate this, Hierarchical Federated Learning (HFL) shares aggregation responsibilities to intermediate layers, significantly reducing communication overhead and enhancing scalability~\cite{liu2020client}.

However, even HFL fails to address domain shift, where models trained in one domain perform poorly when deployed in unseen domains due to discrepancies between training and target distributions. Domain shifts significantly degrade the performance of the model, posing a major obstacle to the widespread adoption of FL models~\cite{fang2024hierarchical,li2023federated}. Although Federated Domain Generalization (FedDG) methods~\cite{nguyen2022fedsr,guo2023out, le2024efficiently} improve robustness to domain shifts, existing FedDG approaches have predominantly targeted single-server FL scenarios. Recent HFL variants like FedRC~\cite{guo2023fedrc} and MTGC~\cite{fang2024hierarchical} offer improved aggregation mechanisms, but these methods exchange intermediate statistics or multi-layer gradients, violating data-free privacy constraint inherent to domain generalization (DG), and lack theoretical guarantees ensuring optimal generalization. Thus, a research gap persists: \textit{the need for a data-free hierarchical aggregation method that efficiently optimizes generalization}.

Motivated by these critical shortcomings, we formally define a novel scenario named Hierarchical Federated Domain Generalization (HFedDG). We then present a theoretical error bound tailored for this scenario. Building upon this theory, we introduce \textbf{H}ierarchical \textbf{Fed}erated Optim\textbf{A}l \textbf{T}ransport and regularized \textbf{M}ean Aggregation (HFedATM) that leverages Filter-wise Optimal Transport (FOT) Alignment followed by a Shrinkage-aware Regularized Mean (RegMean) Aggregation. Unlike previous simplistic averaging approaches, HFedATM first aligns model weights across stations using FOT, ensuring semantically coherent aggregation, then optimally merges these aligned models through RegMean, providing a closed-form, theoretically justified solution that minimizes generalization error. Crucially, HFedATM remains completely data-free, preserving client privacy while simultaneously improving generalization capability. In summary, the main contributions are as follows.

\begin{itemize}
  \item We present the first formalization of the HFedDG scenario and derive a corresponding hierarchical error bound, revealing precisely how hierarchical aggregation affects DG performance.
  \item HFedATM, a method that combines FOT Alignment with Shrinkage-Aware RegMean Aggregation, theoretically guarantees a low overall DG constraint when local (client-station) training has minimal DG risk.
  \item Through extensive experiments across multiple benchmarks that span vision and natural language processing (NLP) tasks, we verify that HFedATM extensively enhances the accuracy of existing FedDG baselines and keeps training time overhead reasonable without leaking any local data.
\end{itemize}

\begin{figure*}[t]
\centering

\begin{overpic}[width=0.9\linewidth]{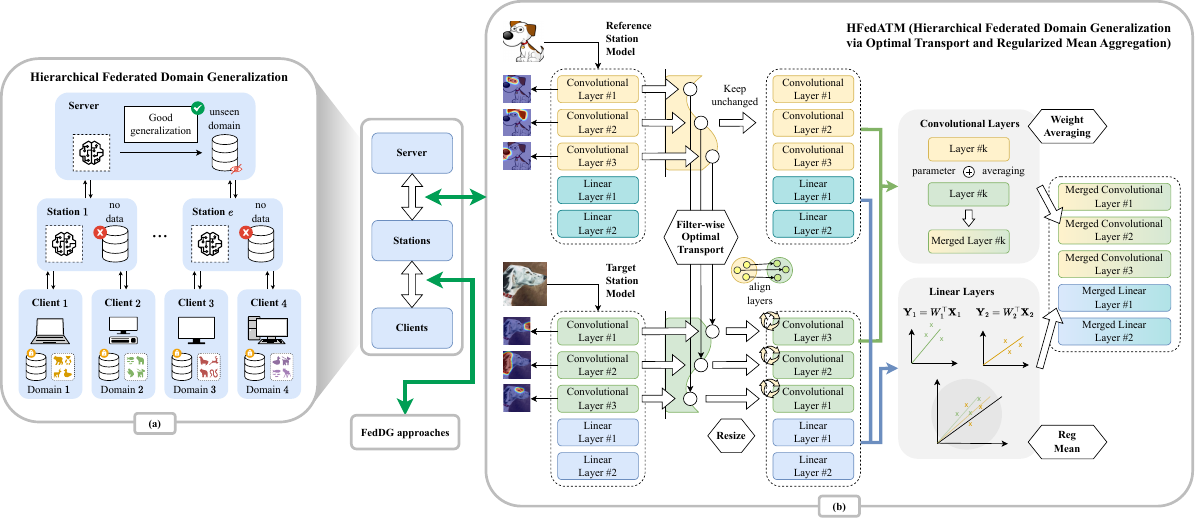}
        \put(2,5){\textbf{}\phantomsubcaption\label{fig:hfeddg_hfedatm_a}}
        \put(52,5){\textbf{}\phantomsubcaption\label{fig:hfeddg_hfedatm_b}}
    \end{overpic}
\caption{(a) Illustration of the HFedDG scenario, highlighting hierarchical model aggregation across clients, stations, and the server. (b) Overview of the proposed HFedATM methodology. Initially, convolutional filters from different station models are aligned via FOT alignment, resolving semantic discrepancies between filter indices. Subsequently, aligned convolutional layers are merged using weight averaging, while linear layers are aggregated through the RegMean procedure.}
\label{fig:hfeddg_hfedatm}
\end{figure*}

\section{Related Work}

\subsection{Hierarchical Federated Learning}

HFL introduces intermediate layers, called stations, between clients and the central server to distribute the aggregation workload, as shown in Figure~\ref{fig:hfeddg_hfedatm_a}. By aggregating client models at these intermediate layers, HFL significantly reduces communication bottlenecks, bandwidth demands, and latency issues inherent in centralized FL. Specifically, HFedAvg~\cite{liu2020client} extends traditional FedAvg by incorporating multi-layer aggregation (client-station-server), enhancing scalability and efficiency~\cite{liu2020client}.

However, HFedAvg and related methods often experience performance degradation under non-IID conditions~\cite{fang2024hierarchical}. To address this, methods such as MTGC~\cite{fang2024hierarchical} incorporate multi-timescale gradient correction, mitigating optimization drift due to heterogeneous data. Similarly, FedRC~\cite{guo2023fedrc} employs Gaussian-weighted model aggregation at the stations, effectively handling non-IID data distribution. Other approaches include EARA~\cite{abdellatif2022communication}, which minimizes the KL divergence between station distributions and a uniform distribution; FedHiSyn~\cite{li2022fedhisyn}, which facilitates inter-device weight sharing based on ring topology; and sFedHP~\cite{liu2023sparse}, leveraging hierarchical proximal mapping for personalized model adaptation. Despite these advances, current HFL methods mainly address heterogeneity among existing clients and overlook the crucial scenario of generalizing to unseen domains, which frequently occurs in real-world deployments and results in significant performance drops due to domain shift~\cite{oza2023unsupervised}.

\subsection{Federated Domain Generalization}

To address domain shift in FL, FedDG was first introduced by \citet{liu2021feddg}, proposing to learn domain-invariant representations among distributed clients. Subsequently, FedSR~\cite{nguyen2022fedsr} advanced this line by generating surrogate data through spectral transformations and randomized convolutions, effectively diversifying client distributions. Alternatively, regularization-based methods, represented by FedProx~\cite{li2020federated}, minimize local client overfitting by penalizing model divergence, thereby discouraging domain-specific features. Recent aggregation-focused approaches, notably FedIIR~\cite{guo2023out}, utilize invariant information regularization at the aggregation step to mitigate local biases and improve robustness to domain shifts. Furthermore, gPerXan~\cite{le2024efficiently} uses a personalized normalization layer together with a guiding regularizer to filter out domain-specific biases while promoting domain-invariant feature learning and improving generalization across unseen client domains.

However, these methods operate under the assumption of a single-server FL architecture, overlooking hierarchical federation settings. Thus, this paper is the first to formally extend FedDG into HFL, clearly defining the HFedDG scenario, and proposing an aggregation solution designed for data-free, privacy-preserving hierarchical model merging.

\section{Methodology}

\subsection{Problem Statement and Notation}

FedDG seeks a single model that can generalize to unseen domains, and HFedDG enriches this paradigm by inserting an intermediate layer of stations between the server and the clients. The hierarchical architecture progresses from \textbf{server} through \textbf{stations} to \textbf{clients}, trims bandwidth, enables domain‑aware clustering at the station, and allows partial participation without compromising stability. Each client belongs to exactly one training domain, and each training domain is represented at one or more stations. The target domains are sampled from a distribution that remains unseen. Formally, we have the following definition of HFedDG:

\begin{definition}
Let $E=\{1,\dots,N_E\}$ be the set of stations and $C_e=\{1,\dots,N_e\}$ the clients managed by the station $e$. In each round, a subset $\hat{C}_e\subseteq C_e$ of active clients is selected. Let $S_{e,i}=\{(x_{e,i}^{(j)},y_{e,i}^{(j)})\}_{j=1}^{n_{e,i}}\sim P^{e,i}_{XY}$ denote the client-level source domains, and these distributions are typically heterogeneous (domain shift). The stations do not hold or access raw data. Instead, they aggregate models trained by their associated clients. Let $P^{\star}_{XY}$ be an unseen target domain such that $P^{\star}_{XY}\neq P^{e,i}_{XY} \forall e,i$. The HFedDG task seeks a hypothesis $h$ that minimizes the expected loss $D_{\mathrm{target}}(h)=\mathbb{E}_{(x,y)\sim P^{\star}_{XY}} [ \ell(h(x),y) ]$ without exposing client data.
\end{definition}

\noindent Subsequently, we introduce a theorem that extends the DG bound established for single-server settings~\cite{li2023federated} to our framework. By decomposing the risk into client-level and server-level terms, it clarifies the remaining generalization gap once strong local learning has been achieved:
\begin{theorem}[Generalization‑error bound for HFedDG]
\label{theo:hfeddg}
Let $d_{\mathcal H}$ be the $\mathcal H$\nobreakdash-divergence. Given the client-level distributions $P_X^{e,i}$, let $\eta_e = \min_{i \in C_e} d_H(P_X^\star, P_X^{e,i})$ be the inner divergence and $\omega_e = \max_{i,j \in C_e} d_H(P_X^{e,i}, P_X^{e,j})$ be the breadth at each station $e$. Similarly, we define the server-level inner divergence and breadth as $\eta = \min_{e\in E, i\in C_e} d_H(P_X^\star, P_X^{e,i})$ and $\omega = \max_{(e,i),(e',j)} d_H(P_X^{e,i}, P_X^{e',j})$. Then, for any hypothesis $h\in\mathcal H$,
\begin{equation}
\label{eq:hfeddg_error_eq}
\begin{aligned}
D_{\mathrm{target}}(h) &\leq  
\sum_{e=1}^{N_E}\sum_{i=1}^{N_e}\rho_{e,i}^{\star}D_{e,i}(h) 
+ \frac{1}{2}\sum_{e=1}^{N_E}\rho_e^{\star}\omega_e\\
&+ \frac{1}{2}\omega + \sum_{e=1}^{N_E}\rho_e^{\star}\eta_e + \eta 
+ \lambda_\mathcal{H}(P_X^\star, P_X^\dagger).
\end{aligned}
\end{equation}
where $D_{e,i}(h)=\mathbb{E}_{(x,y)\sim P^{e,i}_{XY}} [\ell(h(x),y)]$ is client-level risk, $\rho_{e,i}^{\star}\geq 0$ and $\rho_e^{\star}=\sum_{i\in C_e}\rho_{e,i}^{\star}\geq 0$, with $\sum_e\rho_e^{\star}=1$, represent the optimal distributional proximity of the client distributions to the target distribution, $P_X^\dagger$ is the closest client-level distribution to $P_X^\star$, and $\lambda_{\mathcal{H}}$ is the joint-error term. 
\end{theorem}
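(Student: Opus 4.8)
The plan is to prove \eqref{eq:hfeddg_error_eq} by applying the single-server FedDG bound of \citet{li2023federated} twice --- once inside each station over its clients, and once at the server over the stations --- and stitching the two levels together with the triangle inequality and the convexity of the $\mathcal H$-divergence. Three ingredients are used repeatedly. (i) The single-server bound itself, which for a fixed $h$ and sources $\{P^k_{XY}\}_k$ combined with weights $\{\alpha_k\}$ and target $P^\star_{XY}$ controls $D_{\mathrm{target}}(h)$ by $\sum_k\alpha_k D_k(h)$ plus half the breadth $\max_{k,l}d_{\mathcal H}(P^k_X,P^l_X)$ plus the inner divergence $\min_k d_{\mathcal H}(P^k_X,P^\star_X)$ plus a joint-error term $\lambda_{\mathcal H}$. (ii) $d_{\mathcal H}$ is a pseudometric, hence $d_{\mathcal H}(P,Q)\le d_{\mathcal H}(P,R)+d_{\mathcal H}(R,Q)$. (iii) $d_{\mathcal H}$ is convex in each argument, $d_{\mathcal H}(\sum_k\alpha_k P^k_X,\,Q)\le\max_k d_{\mathcal H}(P^k_X,Q)$, which follows directly from the supremum-of-linear-functionals form of the divergence, and $\lambda_{\mathcal H}$ obeys the analogous inequality.

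First I would carry out the station-level step. Fix a station $e$; let $i^\star_e\in C_e$ be a client attaining the inner divergence $\eta_e=\min_{i\in C_e}d_{\mathcal H}(P^\star_X,P^{e,i}_X)$, and apply (i) inside station $e$ with sources $\{P^{e,i}_{XY}\}_{i\in C_e}$ and normalized weights $\rho^\star_{e,i}/\rho^\star_e$. By definition the source breadth here is $\omega_e$, and the triangle inequality (ii) through $P^{e,i^\star_e}_X$ gives $d_{\mathcal H}(P^{e,i}_X,P^\star_X)\le\omega_e+\eta_e$ for every $i\in C_e$, so the inner-divergence and breadth corrections at this level collapse into $\tfrac12\omega_e+\eta_e$ (up to the exact placement of the constants, which is inherited from the single-server bound). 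Writing $D_e(h)=\sum_{i\in C_e}(\rho^\star_{e,i}/\rho^\star_e)D_{e,i}(h)$ for the station-$e$ mixture risk, the station-$e$ inequality then relates the target risk to $D_e(h)$ up to $\tfrac12\omega_e+\eta_e$ and a station-level joint-error term $\lambda^{(e)}_{\mathcal H}$.

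Next comes the server-level step. I would treat the $N_E$ station mixtures $\{P^e_{XY}\}_e$ as the sources of a single-server problem with weights $\{\rho^\star_e\}$, $\sum_e\rho^\star_e=1$, and apply (i) a second time. Since each $P^e_X$ is a convex combination of client marginals, convexity (iii) makes every divergence involving $P^e_X$ no larger than the corresponding client-to-client divergence; in particular the breadth of $\{P^e_X\}_e$ is at most $\omega$, and routing through the globally closest client $P^\dagger_X$ --- the one realizing $d_{\mathcal H}(P^\star_X,P^\dagger_X)=\eta$ --- bounds the server inner divergence by $\eta$ and identifies the joint-error term as $\lambda_{\mathcal H}(P^\star_X,P^\dagger_X)$. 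This produces $D_{\mathrm{target}}(h)\le\sum_e\rho^\star_e D_e(h)+\tfrac12\omega+\eta+\lambda_{\mathcal H}(P^\star_X,P^\dagger_X)$. Substituting the station-level inequality into each $D_e(h)$ and using $\rho^\star_{e,i}=\rho^\star_e\cdot(\rho^\star_{e,i}/\rho^\star_e)$ together with $\sum_{i\in C_e}\rho^\star_{e,i}=\rho^\star_e$ turns $\sum_e\rho^\star_e D_e(h)$ into $\sum_e\sum_i\rho^\star_{e,i}D_{e,i}(h)$ and the station corrections into $\tfrac12\sum_e\rho^\star_e\omega_e+\sum_e\rho^\star_e\eta_e$, giving exactly \eqref{eq:hfeddg_error_eq}.

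The step I expect to be the main obstacle is the bookkeeping of the joint-error terms across the two levels: the station-level applications each contribute a $\lambda^{(e)}_{\mathcal H}$, and the argument must show that $\sum_e\rho^\star_e\lambda^{(e)}_{\mathcal H}$ and the server-level joint error can be absorbed into the single term $\lambda_{\mathcal H}(P^\star_X,P^\dagger_X)$ without double counting. This is exactly where the \emph{optimality} encoded in the weights $\rho^\star$ and the choice of $P^\dagger_X$ as the globally closest client is used: one picks, simultaneously at both levels, the reference distributions and weights that minimize the aggregate ideal-hypothesis error, so that no additional $\lambda$ survives. A secondary technical point is verifying that the server-level breadth and inner divergence of the station mixtures are genuinely bounded by $\omega$ and $\eta$ rather than by looser quantities, and that the constants --- the factor $\tfrac12$ on every breadth term and its absence on the inner-divergence terms --- come out precisely as written from the concrete form of the single-server bound and its $\mathcal H$-divergence convention; these checks are routine once the divergence and joint-error definitions are pinned down.
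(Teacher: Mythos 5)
Your two-level recursion (clients $\to$ station mixtures $\to$ server) is a natural idea but it does not go through as written, and it is not the paper's route. The first problem is the direction of the final ``substitution'': your station-level step produces inequalities of the form $D_{\mathrm{target}}(h)\le D_e(h)+\tfrac12\omega_e+\eta_e+\lambda^{(e)}_{\mathcal H}$, i.e.\ upper bounds on the \emph{target} risk in terms of $D_e(h)$. You cannot substitute such an inequality into the term $D_e(h)$ appearing on the right-hand side of the server-level bound --- that would require a bound in the opposite direction. Since the risk is linear in the data distribution, $D_e(h)=\sum_i(\rho^\star_{e,i}/\rho^\star_e)D_{e,i}(h)$ holds as an exact identity, so the station-level corrections $\tfrac12\omega_e+\eta_e$ never legitimately enter the final bound through your route; what you are really doing is a single application at the server level with mixture sources. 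The second problem is that this single application does not yield the stated constants: convexity of $d_{\mathcal H}$ gives $d_{\mathcal H}(P^e_X,P^\star_X)\le\max_{i\in C_e}d_{\mathcal H}(P^{e,i}_X,P^\star_X)$, a \emph{max} over clients, so the server-level inner divergence $\min_e d_{\mathcal H}(P^e_X,P^\star_X)$ is bounded only by $\min_e\max_i d_{\mathcal H}(P^{e,i}_X,P^\star_X)\le\eta+\omega_{e^\dagger}$, not by $\eta$; an extra within-station breadth term survives and is not dominated by the terms in Equation~\ref{eq:hfeddg_error_eq}. Likewise the server-level joint-error term is taken with respect to the closest \emph{station mixture}, not the client-level $P^\dagger_X$, and your hope that optimality of the $\rho^\star$ absorbs the discrepancy is asserted rather than proved.

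The paper avoids all of this by never forming station mixtures. It applies the two-domain bound of Ben-David et al.\ once per client, $D_{\mathrm{target}}(h)\le D_{e,i}(h)+\tfrac12 d_{\mathcal H\Delta\mathcal H}(P_{e,i},P^\star)+\lambda_{\mathcal H}(P_{e,i},P^\star)$, replaces each $\lambda_{\mathcal H}(P_{e,i},P^\star)$ by $\lambda_{\mathcal H}(P^\dagger,P^\star)$, takes the $\rho^\star_{e,i}$-weighted sum, and then controls each divergence $d_{\mathcal H\Delta\mathcal H}(P_{e,i},P^\star)$ by a three-hop triangle inequality through a per-station pivot client $P^\dagger_e$ and the global pivot $P^\dagger$: $d_{\mathcal H}(P_{e,i},P^\dagger_e)\le\omega_e$, $d_{\mathcal H}(P^\dagger_e,P^\dagger)\le\omega$, $d_{\mathcal H}(P^\dagger,P^\star)=\eta$. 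All intermediate distributions are actual client distributions, so the min/max quantities in the theorem appear directly and only one joint-error term is ever introduced. If you want to keep a hierarchical flavor, the correct mechanism is this hierarchical \emph{triangle inequality on pivot clients}, not a recursive application of the single-server risk bound to mixtures.
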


After effective local training, client-level risks $D_{e,i}(h)$ become small, particularly when robust FedDG methods are used locally. In such a scenario, the generalization bound is influenced by the divergence terms $(\eta_{e},\omega_{e})$ and $(\eta,\omega)$, which characterize the intra-station and residual inter-station domain discrepancies, respectively.

\subsection{Overview of HFedATM}
\label{sec:overview}
After local DG has reduced the within-station shift, the remaining gap is the inter‑station divergence, driven by $(\eta, \omega)$ in the HFedDG error bound (Equation~\ref{eq:hfeddg_error_eq}). There are two main reasons for this gap:

\begin{itemize}
\item \textbf{Mismatched filter order.} The index of a convolutional filter is arbitrary: the first filter on the edge $A$ might be a vertical‑station detector, while the same concept is in the tenth filter on station $B$~\cite{singh2020model}. Index‑wise averaging, therefore, destroys semantics.
\item \textbf{Station‑specific correlations in linear layers.} Even if the filters were aligned, the linear layers that follow them encode correlations unique to each station's data. Therefore, naive averaging mixes incompatible statistics.
\end{itemize}
Our HFedATM tackles these two causes: \textit{(i) FOT Alignment} solves a Wasserstein assignment for every convolutional layer so that all stations share a common and semantically meaningful filter order; and \textit{(ii) RegMean Aggregation} then fuses every linear layer in closed form in this aligned feature space. Together, HFedATM collapses the inter-station gap. Figure~\ref{fig:hfeddg_hfedatm_b} gives an overview of the HFedATM workflow.

\subsection{Filter-wise Optimal Transport}
FOT Alignment removes the first mismatch before any weight averaging takes place. Specifically, it seeks a one-to-one permutation that brings the corresponding filters to the same index across all stations, so later merging operates on aligned channels. Let $W_{e}^{(l)}\in\mathbb{R}^{k\times c_{\text{in}}\times n\times n}$ denote the filter bank of station $e$ in layer $l$, where $k$ is the number of kernels and $n\ge1$ their spatial size. Each kernel is flattened and $\ell_2$‑normalized, producing vectors
\begin{equation}
\widetilde w_{e}^{(l)}[a]  = 
\frac{\mathrm{vec} \bigl(W_{e}^{(l)}[a]\bigr)}
     {\bigl\|\mathrm{vec} \bigl(W_{e}^{(l)}[a]\bigr)\bigr\|_{2}},
\qquad
a=1,\dots ,k .
\end{equation}
For a pair of stations $e$ and $e'$ we construct the squared Euclidean cost matrix (where $a,b=1,\dots ,k$):
\begin{equation}
D_{e,e'}^{(l)}(a,b)=
\bigl|\widetilde w_{e}^{(l)}[a]-\widetilde w_{e'}^{(l)}[b]\bigr|_{2}^{2}.
\end{equation}
Station $1$ is chosen as the reference; for every other station, we solve a discrete Optimal Transport problem
\begin{equation}
\Pi_{1,e}^{(l)} =
\arg\min_{\Pi\in\mathcal U}
\bigl\langle\Pi, D_{1,e}^{(l)}\bigr\rangle ,
\label{eq:fot}
\end{equation}
where $\mathcal U=\{\Pi\mid\Pi\mathbf 1=\mathbf 1, \Pi^{ \top} \mathbf 1=\mathbf 1, \Pi\ge0\}$ is the Birkhoff polytope. This Optimal Transport problem can be solved by the entropic-regularized Sinkhorn solver~\cite{cuturi2013sinkhorn}. The optimal permutation is then applied to the original filters:
\begin{equation}
W_{e}^{(l)} \leftarrow \Pi_{1,e}^{(l)} W_{e}^{(l)},
\end{equation}
ensuring filters at corresponding indices across stations capture the same visual primitives. After alignment, if filter sizes differ (e.g., $3\times3$ vs. $5\times5$), each filter is resized again to match the size of the reference station filter via bilinear interpolation~\cite{gonzalez2009digital}:
\begin{equation}
W_{e}^{(l)}[a]\leftarrow\text{resize}(W_{e}^{(l)}[a], n_{1}),\quad a=1,\dots,k.
\end{equation}

Since the optimal assignment cost depends only on the number of kernels $k$, and the resizing step is computationally inexpensive. After this stage, aligned stations produce coherent and consistent channels, establishing an efficient basis for the next aggregation step.

\subsection{Regularized Mean Aggregation}
The second stage of HFedATM merges the weights once the semantic correspondence has been enforced by FOT. Two weight types must be handled, including \textit{(i) aligned convolutional filters} and \textit{(ii) linear layers}.

After FOT, every filter index $a=1,\dots ,k$ refers to the same visual primitive on every station. Therefore, we can perform a weighted arithmetic mean as follows:
\begin{equation}
\overline W^{(l)}[a] =
\displaystyle\sum_{e=1}^{N_E}\gamma_{e}W_{e}^{(l)}[a] \Bigg/
\displaystyle\sum_{e=1}^{N_E}\gamma_{e}.
\label{eq:convmerge}
\end{equation}
We default to $\gamma_{e}=|\mathcal S_{e}|$ so that stations with more active clients contribute proportionally.

Regarding the linear layers, they depend on correlations between channels, and weight averaging them would ignore the activation geometry. Therefore, motivated by the work of~\citet{jin2022dataless},  During the very last forward pass of its local FedDG epoch, the client $\langle e,i\rangle$ records the activation matrix $X_{e,i}^{(l)}\in\mathbb R^{d\times m}$ for each dense layer $l$ ($d$=width, $m$=mini‑batch size) and forms the local Gram
\begin{equation}
\label{eq:localgram}
G_{e,i}^{(l)} = X_{e,i}^{(l)\top} X_{e,i}^{(l)}.
\end{equation}
The client optionally clips $\|G_{e,i}^{(l)}\|_{2}\le C$ and adds Gaussian noise for Differential-Privacy (DP), then uploads $G_{e,i}^{(l)}$ together with its weight update.

Because the station has no raw data, we simply average the incoming matrices
\begin{equation}
G_{e}^{(l)} =
\frac{1}{|\mathcal S_e|}\sum_{i\in\mathcal S_e} G_{e,i}^{(l)},
\end{equation}
and apply diagonal shrinkage
\begin{equation}
\widehat G_{e}^{(l)} =
\alpha G_{e}^{(l)} + (1-\alpha)\operatorname{diag}\bigl(G_{e}^{(l)}\bigr),
\quad 0\le\alpha\le1 ,
\label{eq:shrink}
\end{equation}
with $\alpha=0.75$ by default, followed by~\citet{jin2022dataless}. Only the shrunk matrix $\widehat G_{e}^{(l)}$ and the aligned weights $\widetilde W_{e}^{(l)}$ leave the station, and no data are exposed.

Given $\{\widehat G_{e}^{(l)},\widetilde W_{e}^{(l)}\}_{e=1}^{N_E}$, the server solves the objective that is data-free but aware of inter-station variability:
\begin{equation}
W_{\text{ATM}}^{(l)} =
\arg\min_{W}
\sum_{e=1}^{N_E}
\bigl| W^{\top} X_{e}^{(l)} - \widetilde W_{e}^{(l)\top} X_{e}^{(l)} \bigr|_{F}^{2}.
\end{equation}
Replacing $X_{e}^{(l)\top} X_{e}^{(l)}$ by $\widehat G_{e}^{(l)}$ yields the solution
\begin{equation}
W_{\text{ATM}}^{(l)} =
\Bigl( \sum_{e=1}^{N_E} \widehat G_{e}^{(l)} \Bigr)^{-1}
\Bigl( \sum_{e=1}^{N_E} \widehat G_{e}^{(l)} , \widetilde W_{e}^{(l)} \Bigr).
\label{eq:regmean}
\end{equation}
The complete algorithm is then displayed in Appendix~\ref{sec:algo}.

\begin{table*}[t]
\centering
{\small
\begin{tabular}{ccc|cccccccccccc|cccc}
\toprule
\multicolumn{3}{c|}{\textbf{Task ($\rightarrow$)}}                                                   & \multicolumn{12}{c|}{\textbf{Vision}}                                                                                                                                                                                                   & \multicolumn{4}{c}{\textbf{NLP}}                              \\ \midrule
\multicolumn{2}{c|}{\textbf{Baselines ($\downarrow$)}}                  & \multirow{2}{*}{$\lambda$} & \multicolumn{4}{c|}{\textbf{PACS}}                                                 & \multicolumn{4}{c|}{\textbf{Office‑Home}}                                          & \multicolumn{4}{c|}{\textbf{TerraInc}}                        & \multicolumn{4}{c}{\textbf{Amazon Reviews}}                   \\ \cmidrule{1-2} \cmidrule{4-19} 
\textbf{Client-Station}  & \multicolumn{1}{c|}{\textbf{Station-Server}} &                            & \textbf{P}    & \textbf{A}    & \textbf{C}    & \multicolumn{1}{c|}{\textbf{S}}    & \textbf{Pr}   & \textbf{Ar}   & \textbf{Cl}   & \multicolumn{1}{c|}{\textbf{R}}    & \textbf{L38}  & \textbf{L43}  & \textbf{L46}  & \textbf{L100} & \textbf{B}    & \textbf{D}    & \textbf{E}    & \textbf{K}    \\ \midrule
\multirow{2}{*}{FedAvg}  & \multicolumn{1}{c|}{+Avg}                    & \multirow{8}{*}{$1.0$}     & 81.8          & 77.7          & 78.0          & \multicolumn{1}{c|}{69.7}          & 64.3          & 52.1          & 45.1          & \multicolumn{1}{c|}{69.7}          & \textbf{45.7} & \textbf{48.4} & \textbf{40.1} & 34.7          & 70.3          & 70.1          & 69.8          & 69.5          \\
                         & \multicolumn{1}{c|}{+HFedATM}                &                            & \textbf{83.7} & \textbf{79.5} & \textbf{79.8} & \multicolumn{1}{c|}{\textbf{71.3}} & \textbf{65.7} & \textbf{53.7} & \textbf{46.9} & \multicolumn{1}{c|}{\textbf{71.2}} & 45.5          & 48.2          & 40.0          & \textbf{36.2} & \textbf{78.1} & \textbf{78.6} & \textbf{78.4} & \textbf{78.9} \\ \cmidrule{1-2} \cmidrule{4-19} 
\multirow{2}{*}{FedProx} & \multicolumn{1}{c|}{+Avg}                    &                            & 83.3          & 78.1          & 79.1          & \multicolumn{1}{c|}{69.4}          & 65.4          & 53.1          & \textbf{46.2} & \multicolumn{1}{c|}{70.6}          & 46.3          & \textbf{49.1} & 41.1          & \textbf{35.3} & 71.3          & 71.1          & 70.8          & 70.4          \\
                         & \multicolumn{1}{c|}{+HFedATM}                &                            & \textbf{84.9} & \textbf{79.9} & \textbf{80.8} & \multicolumn{1}{c|}{\textbf{71.1}} & \textbf{66.9} & \textbf{54.8} & 45.8          & \multicolumn{1}{c|}{\textbf{72.3}} & \textbf{47.9} & 48.9          & \textbf{43.1} & 35.2          & \textbf{79.1} & \textbf{79.5} & \textbf{79.3} & \textbf{79.7} \\ \cmidrule{1-2} \cmidrule{4-19} 
\multirow{2}{*}{FedSR}   & \multicolumn{1}{c|}{+Avg}                    &                            & 84.1          & 80.9          & 83.6          & \multicolumn{1}{c|}{73.4}          & 68.9          & 56.6          & 48.9          & \multicolumn{1}{c|}{74.3}          & 47.9          & 50.8          & 43.1          & 36.2          & 72.2          & 72.1          & 71.8          & 71.5          \\
                         & \multicolumn{1}{c|}{+HFedATM}                &                            & \textbf{87.7} & \textbf{84.4} & \textbf{86.6} & \multicolumn{1}{c|}{\textbf{76.7}} & \textbf{72.6} & \textbf{59.9} & \textbf{52.7} & \multicolumn{1}{c|}{\textbf{77.7}} & \textbf{51.3} & \textbf{54.6} & \textbf{46.4} & \textbf{39.7} & \textbf{80.9} & \textbf{81.2} & \textbf{81.1} & \textbf{81.6} \\ \cmidrule{1-2} \cmidrule{4-19} 
\multirow{2}{*}{FedIIR}  & \multicolumn{1}{c|}{+Avg}                    &                            & 85.1          & 81.6          & 84.1          & \multicolumn{1}{c|}{74.1}          & 69.4          & 57.3          & 49.7          & \multicolumn{1}{c|}{74.8}          & 48.3          & 51.6          & 43.4          & 36.8          & 72.6          & 72.4          & 72.2          & 71.8          \\
                         & \multicolumn{1}{c|}{+HFedATM}                &                            & \textbf{88.3} & \textbf{84.8} & \textbf{87.5} & \multicolumn{1}{c|}{\textbf{77.7}} & \textbf{73.5} & \textbf{60.7} & \textbf{53.5} & \multicolumn{1}{c|}{\textbf{78.7}} & \textbf{51.9} & \textbf{55.5} & \textbf{47.3} & \textbf{40.5} & \textbf{81.3} & \textbf{81.7} & \textbf{81.5} & \textbf{81.9} \\ \midrule
\multirow{2}{*}{FedAvg}  & \multicolumn{1}{c|}{+Avg}                    & \multirow{8}{*}{$0.1$}     & 79.5          & 75.3          & 74.9          & \multicolumn{1}{c|}{66.4}          & 61.5          & 49.1          & 42.7          & \multicolumn{1}{c|}{66.4}          & \textbf{43.1} & 46.2          & 38.1          & 32.1          & 68.2          & 68.0          & 67.7          & 67.4          \\
                         & \multicolumn{1}{c|}{+HFedATM}                &                            & \textbf{81.4} & \textbf{76.8} & \textbf{76.6} & \multicolumn{1}{c|}{\textbf{68.1}} & \textbf{63.4} & \textbf{50.7} & \textbf{44.6} & \multicolumn{1}{c|}{\textbf{68.2}} & 42.8          & \textbf{47.8} & \textbf{39.5} & \textbf{33.7} & \textbf{76.1} & \textbf{76.3} & \textbf{76.0} & \textbf{76.6} \\ \cmidrule{1-2} \cmidrule{4-19} 
\multirow{2}{*}{FedProx} & \multicolumn{1}{c|}{+Avg}                    &                            & 80.3          & 75.4          & 74.3          & \multicolumn{1}{c|}{65.7}          & 62.9          & 50.2          & 44.0          & \multicolumn{1}{c|}{67.0}          & 42.6          & 45.7          & 37.5          & \textbf{31.5} & 69.1          & 68.9          & 68.5          & 68.2          \\
                         & \multicolumn{1}{c|}{+HFedATM}                &                            & \textbf{81.9} & \textbf{77.3} & \textbf{76.2} & \multicolumn{1}{c|}{\textbf{67.5}} & \textbf{64.7} & \textbf{52.1} & \textbf{45.7} & \multicolumn{1}{c|}{\textbf{69.0}} & \textbf{44.3} & \textbf{47.6} & \textbf{39.2} & 31.2          & \textbf{77.0} & \textbf{77.3} & \textbf{77.0} & \textbf{77.5} \\ \cmidrule{1-2} \cmidrule{4-19} 
\multirow{2}{*}{FedSR}   & \multicolumn{1}{c|}{+Avg}                    &                            & 82.1          & 78.1          & 80.5          & \multicolumn{1}{c|}{69.3}          & 65.5          & 53.1          & 46.1          & \multicolumn{1}{c|}{70.1}          & 45.1          & 48.2          & \textbf{40.5} & 33.1          & 70.7          & 70.5          & 70.2          & 69.9          \\
                         & \multicolumn{1}{c|}{+HFedATM}                &                            & \textbf{85.6} & \textbf{81.8} & \textbf{83.7} & \multicolumn{1}{c|}{\textbf{72.9}} & \textbf{68.8} & \textbf{56.4} & \textbf{49.3} & \multicolumn{1}{c|}{\textbf{73.5}} & \textbf{48.6} & \textbf{51.9} & 44.1          & \textbf{33.9} & \textbf{78.3} & \textbf{78.1} & \textbf{77.7} & \textbf{77.4} \\ \cmidrule{1-2} \cmidrule{4-19} 
\multirow{2}{*}{FedIIR}  & \multicolumn{1}{c|}{+Avg}                    &                            & 83.0          & 79.0          & 81.1          & \multicolumn{1}{c|}{69.8}          & 66.0          & 53.5          & 46.5          & \multicolumn{1}{c|}{70.5}          & 45.0          & 48.3          & 40.6          & \textbf{33.6} & 71.0          & 70.7          & 70.4          & 70.1          \\
                         & \multicolumn{1}{c|}{+HFedATM}                &                            & \textbf{86.8} & \textbf{82.6} & \textbf{84.6} & \multicolumn{1}{c|}{\textbf{73.9}} & \textbf{70.0} & \textbf{57.2} & \textbf{50.1} & \multicolumn{1}{c|}{\textbf{74.3}} & \textbf{49.2} & \textbf{52.5} & \textbf{44.4} & 33.3          & \textbf{78.7} & \textbf{78.6} & \textbf{78.2} & \textbf{77.9} \\ \midrule
\multirow{2}{*}{FedAvg}  & \multicolumn{1}{c|}{+Avg}                    & \multirow{8}{*}{$0.0$}     & 70.5          & 64.6          & 65.5          & \multicolumn{1}{c|}{56.5}          & 53.6          & 41.0          & \textbf{36.1} & \multicolumn{1}{c|}{57.5}          & 37.1          & 39.4          & \textbf{32.2} & \textbf{27.1} & 64.1          & 64.0          & 63.7          & 63.5          \\
                         & \multicolumn{1}{c|}{+HFedATM}                &                            & \textbf{72.4} & \textbf{66.4} & \textbf{67.4} & \multicolumn{1}{c|}{\textbf{58.4}} & \textbf{55.1} & \textbf{42.7} & 35.9          & \multicolumn{1}{c|}{\textbf{59.2}} & \textbf{38.9} & \textbf{41.4} & 32.1          & 27.0          & \textbf{75.7} & \textbf{75.6} & \textbf{75.4} & \textbf{75.2} \\ \cmidrule{1-2} \cmidrule{4-19} 
\multirow{2}{*}{FedProx} & \multicolumn{1}{c|}{+Avg}                    &                            & 71.6          & 65.8          & 66.6          & \multicolumn{1}{c|}{57.1}          & 55.1          & 42.1          & \textbf{36.8} & \multicolumn{1}{c|}{58.2}          & 37.7          & 40.4          & \textbf{32.9} & \textbf{27.9} & 65.3          & 65.1          & 64.8          & 64.6          \\
                         & \multicolumn{1}{c|}{+HFedATM}                &                            & \textbf{73.4} & \textbf{67.8} & \textbf{68.3} & \multicolumn{1}{c|}{\textbf{59.0}} & \textbf{56.7} & \textbf{43.9} & 35.7          & \multicolumn{1}{c|}{\textbf{60.1}} & \textbf{39.9} & \textbf{42.3} & 32.5          & 26.8          & \textbf{76.3} & \textbf{76.2} & \textbf{75.9} & \textbf{75.8} \\ \cmidrule{1-2} \cmidrule{4-19} 
\multirow{2}{*}{FedSR}   & \multicolumn{1}{c|}{+Avg}                    &                            & 73.6          & 68.6          & 70.6          & \multicolumn{1}{c|}{59.3}          & 57.4          & 44.1          & 38.8          & \multicolumn{1}{c|}{60.5}          & 39.3          & 42.2          & 34.1          & \textbf{28.9} & 66.3          & 66.2          & 65.9          & 65.7          \\
                         & \multicolumn{1}{c|}{+HFedATM}                &                            & \textbf{77.4} & \textbf{72.5} & \textbf{74.4} & \multicolumn{1}{c|}{\textbf{62.9}} & \textbf{60.7} & \textbf{47.4} & \textbf{42.6} & \multicolumn{1}{c|}{\textbf{64.4}} & \textbf{43.3} & \textbf{46.1} & \textbf{38.1} & 28.5          & \textbf{78.7} & \textbf{78.6} & \textbf{78.3} & \textbf{78.2} \\ \cmidrule{1-2} \cmidrule{4-19} 
\multirow{2}{*}{FedIIR}  & \multicolumn{1}{c|}{+Avg}                    &                            & 76.1          & 71.1          & 73.1          & \multicolumn{1}{c|}{61.1}          & 59.3          & 46.3          & 40.6          & \multicolumn{1}{c|}{61.5}          & 40.3          & 43.3          & 35.4          & \textbf{29.6} & 66.8          & 66.6          & 66.3          & 66.1          \\
                         & \multicolumn{1}{c|}{+HFedATM}                &                            & \textbf{79.4} & \textbf{74.5} & \textbf{76.4} & \multicolumn{1}{c|}{\textbf{64.9}} & \textbf{62.7} & \textbf{49.4} & \textbf{43.8} & \multicolumn{1}{c|}{\textbf{65.3}} & \textbf{45.3} & \textbf{48.4} & \textbf{40.3} & 29.1          & \textbf{79.2} & \textbf{79.0} & \textbf{78.8} & \textbf{78.5} \\ \bottomrule
\end{tabular}
}
\caption{Performance (\%) comparison of FedDG baselines with and without HFedATM across multiple benchmarks and heterogeneity settings ($\lambda$). Bold values highlight the better-performing aggregation strategy within each baseline method.}
\label{tab:acc}
\end{table*}

\section{Theoretical Analysis}
We now provide a formal guarantee that HFedATM improves DG in the hierarchical setting. We state the main theorem below and defer all assumptions, supporting lemmas, and detailed proofs to Appendix~\ref{sec:theo_proof}.

\begin{theorem}
\label{theo:hfedatm_bound}
Assume the local training in every client achieves $D_{e,i}(h^{(R)})\le \varepsilon_{\mathrm{local}}$ for all $(e,i)$, and that the conditions of Theorem~\ref{theo:hfeddg} hold. Then the hypothesis $h^{(R)}_{\mathrm{ATM}}$ output by HFedATM satisfies
\begin{align}
D_{\mathrm{target}} \bigl(h^{(R)}_{\mathrm{ATM}}\bigr)
 \le &
\varepsilon_{\mathrm{local}}
+\frac{1}{2}(1-\beta)^R 
       ( 
           \omega^{(0)}
          +\sum_{e=1}^{N_E}\rho_e^{\star}\omega_e^{(0)}
         ) \notag \\
&+(1-\beta)^R 
       ( 
           \eta^{(0)}
          +\sum_{e=1}^{N_E}\rho_e^{\star}\eta_e^{(0)} \label{eq:hfedatm}
         )\\
&+\lambda_{\mathcal H}(P_X^{\star},P_X^{\dagger}), \notag
\end{align}
where $\{\rho_{e}^{\star}\}$ are the optimal coupling weights of Theorem~\ref{theo:hfeddg}.
\end{theorem}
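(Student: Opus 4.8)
The plan is to read Theorem~\ref{theo:hfedatm_bound} as a sharpening of Theorem~\ref{theo:hfeddg}: the general bound~\eqref{eq:hfeddg_error_eq} already separates the error into the weighted client risks, the intra-station breadths $\omega_e$, the inter-station breadth $\omega$, the intra-station divergences $\eta_e$, the inter-station divergence $\eta$, and the irreducible term $\lambda_{\mathcal H}(P_X^{\star},P_X^{\dagger})$. It therefore suffices to instantiate that bound at $h=h^{(R)}_{\mathrm{ATM}}$ and then show (i) the weighted client risks are controlled by $\varepsilon_{\mathrm{local}}$ and (ii) each divergence quantity measured after $R$ rounds of HFedATM is at most $(1-\beta)^R$ times its value at initialization.

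For step (i), note that $\rho_e^{\star}=\sum_{i\in C_e}\rho_{e,i}^{\star}$ and $\sum_e\rho_e^{\star}=1$, so $\{\rho_{e,i}^{\star}\}$ is a probability vector and $\sum_{e,i}\rho_{e,i}^{\star}D_{e,i}(h^{(R)}_{\mathrm{ATM}})$ is a convex combination of per-client risks; it is thus enough to bound $D_{e,i}(h^{(R)}_{\mathrm{ATM}})$ for each $(e,i)$. I would obtain this from a loss-preservation lemma in Appendix~\ref{sec:theo_proof}: FOT merely permutes and (bilinearly) rescales filters, operations that leave the realized network function unchanged up to an interpolation residual, and the RegMean weights of~\eqref{eq:regmean} are by definition the minimizer of the aligned-space reconstruction error $\sum_e\|W^{\top}X_e^{(l)}-\widetilde W_e^{(l)\top}X_e^{(l)}\|_F^2$, so the merged linear layers reproduce each station's activations up to a residual that is itself bounded by the inter-station spread of the $\widetilde W_e^{(l)}$. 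Pushing this through an $L$-Lipschitz loss gives $D_{e,i}(h^{(R)}_{\mathrm{ATM}})\le D_{e,i}(h^{(R)})+(\text{residual})\le\varepsilon_{\mathrm{local}}$ once the residual, which inherits the contraction of step (ii), is absorbed.

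Step (ii) is the heart of the argument and would rest on a one-round contraction lemma. Writing $\Delta^{(r)}$ for the inter-station parameter discrepancy after round $r$, evaluated in the FOT-aligned frame, the lemma would state $\Delta^{(r+1)}\le(1-\beta)\Delta^{(r)}$ for a $\beta\in(0,1]$ depending on the Sinkhorn assignment quality, the shrinkage level $\alpha$, and the conditioning of $\sum_e\widehat G_e^{(l)}$. The mechanism combines three facts: FOT removes the permutation component of the discrepancy in a single solve of~\eqref{eq:fot}; RegMean returns the Gram-weighted barycenter of the aligned linear layers, whose residual to any individual station is a strict fraction of the pre-merge spread; and the subsequent local FedDG epochs re-expand the discrepancy by at most a bounded-drift amount that is dominated by the contraction. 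A Lipschitz estimate relating parameter discrepancy to the $\mathcal H$-divergence between the induced feature distributions then transfers the contraction to the bound's quantities, giving $\omega^{(r+1)}\le(1-\beta)\omega^{(r)}$, $\eta^{(r+1)}\le(1-\beta)\eta^{(r)}$, and likewise for $\omega_e$ and $\eta_e$. Unrolling from $r=0$ to $R$ yields $\omega^{(R)}\le(1-\beta)^R\omega^{(0)}$, $\eta^{(R)}\le(1-\beta)^R\eta^{(0)}$, and the per-station analogues; substituting these and the step-(i) estimate into~\eqref{eq:hfeddg_error_eq}, and using $\sum_e\rho_e^{\star}=1$ so the weighted sums remain genuine convex combinations, reproduces~\eqref{eq:hfedatm} exactly.

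I expect the one-round contraction lemma to be the main obstacle, because the FOT step is combinatorial and non-iterative, so the geometric decay cannot come from alignment alone -- it must emerge from the composite map align$\,\to\,$merge$\,\to\,$local-train, and the delicate point is controlling the re-expansion caused by the local epochs tightly enough that this composite is a genuine contraction rather than merely non-expansive. A secondary difficulty is avoiding circularity between steps (i) and (ii): the merged model's per-client risk is bounded in terms of the same inter-station spread that the contraction shrinks, so I would prove the spread/residual bound first, independently of the loss, and only then use it both to close step (i) and to seed the recursion in step (ii).
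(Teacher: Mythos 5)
Your proposal follows essentially the same route as the paper's proof: instantiate the Theorem~\ref{theo:hfeddg} bound at $h^{(R)}_{\mathrm{ATM}}$, replace the weighted client risks by $\varepsilon_{\mathrm{local}}$, establish a per-round geometric contraction of $(\omega,\omega_e,\eta,\eta_e)$ by a factor $(1-\beta)$ attributable to FOT (for the breadths) and RegMean (for the divergences), and unroll over $R$ rounds before substituting back. The obstacles you flag --- the one-round contraction lemma, the re-expansion from local training, and the transfer from parameter discrepancy to $\mathcal H$-divergence between induced distributions --- are precisely the steps the paper asserts with minimal justification (it defines $\beta_{\mathrm{FOT}}$ as a cost ratio, sets $\beta=1-(1-\beta_{\mathrm{FOT}})(1-\alpha)$, and declares the contraction), so your sketch is, if anything, more candid about where the argument is fragile.
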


The first line of the bound involves the \emph{client‑level risks}, which are already driven down by any strong FedDG method used during local training. The two subsequent lines contain the \emph{intra‑station} terms $(\eta_e,\omega_e)$ and the \emph{inter‑station} terms $(\eta,\omega)$. HFedATM's design attacks these divergence terms directly: \textbf{FOT Alignment} permutes convolutional filters so that semantically corresponding channels are matched across stations, reducing the breadth terms $\omega_e$ and~$\omega$; and \textbf{RegMean Aggregation} then merges the aligned weights in a data‑free form which provably minimizes the weighted sum of station losses, further tightening $\eta_e$ and~$\eta$. Consequently, as shown in Equation~\ref{eq:hfedatm}, $D_{\mathrm{target}}(h^{(R)}_{\mathrm{ATM}})$ is strictly smaller than that obtained by plain weight averaging, guaranteeing that HFedATM lowers error bound.

\section{Experiments}

\subsection{Experimental setup}

\paragraph{Baseline Methods} In client-station communication, we choose FedAvg~\cite{mcmahan2017communication} as the baseline. For directly addressing data heterogeneity, we also consider FedProx~\cite{li2020federated}. Next, we choose two FedDG methods, the regularization-based algorithm, FedSR~\cite{nguyen2022fedsr}, and the aggregation-based technique, FedIIR~\cite{guo2023out}. In station-server communication, we use weight averaging (Avg) as a baseline to evaluate its performance against HFedATM. We conducted all experiments using the framework provided by~\citet{tan2023pfedsim}.

\paragraph{Datasets} In this section, we evaluate our proposed HFedATM method for vision and NLP tasks. For vision classification tasks, we used three datasets, including \textbf{PACS}~\cite{li2017deeper}, \textbf{Office-Home}~\cite{venkateswara2017deep}, and \textbf{TerraInc}~\cite{beery2018recognition}. Regarding NLP tasks, we utilized \textbf{Amazon Reviews} dataset~\cite{balaji2018metareg}.

\paragraph{Data Partitioning} In our experiments, we simulate an HFL system with a total of $10$ stations and $100$ clients, and each station has $10$ clients; all clients will participate in training in each communication round. We use heterogeneous partitioning~\cite{bai2023benchmarking} to control client-level domain heterogeneity through the heterogeneity parameter $\lambda$, where $\lambda\in\left\{1.0, 0.1, 0.0\right\}$.

\paragraph{Model Architectures} Following previous works in FL~\cite{nguyen2022fedsr, guo2023out}, we used LeNet-5~\cite{lecun1998gradient} for vision tasks and RoBERTa-base~\cite{liu2019roberta} for NLP tasks. Detailed experimental setup is provided in Appendix~\ref{sec:detailed_exp}.

\subsection{Results}
\label{sec:results}

\begin{figure}[ht]
\centering
\includegraphics[width=\linewidth]{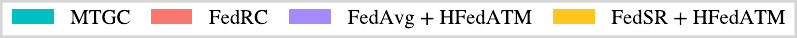}\\
\vspace*{0.3cm}
\includegraphics[width=0.9\linewidth]{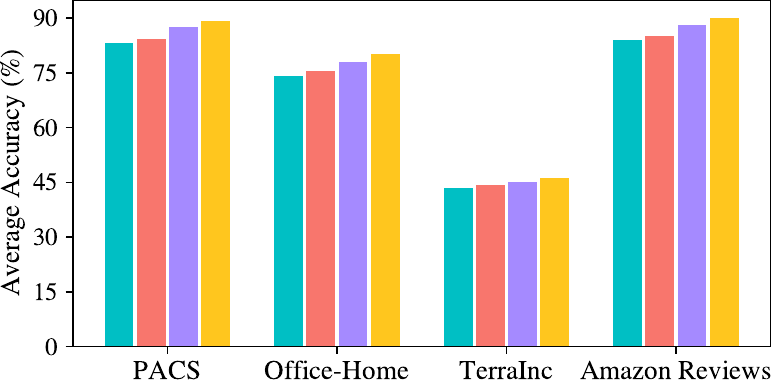}
\caption{Performance comparison of HFedATM (integrated with FedAvg and FedSR) against recent HFL methods, MTGC and FedRC, across diverse benchmarks. HFedATM consistently achieves higher accuracy.}
\label{fig:hfl_compare}
\end{figure}

\begin{table*}[t]
\centering
{\small
\begin{tabular}{c|c|cccccc|c}
\toprule
\multicolumn{2}{c|}{\textbf{Task ($\rightarrow$)}}     & \multicolumn{6}{c|}{\textbf{Vision}}                                                                                     & \textbf{NLP}     \\ \midrule
\multicolumn{2}{c|}{\textbf{Baselines ($\downarrow$)}} & \multicolumn{3}{c|}{\textbf{ResNet-18}}                               & \multicolumn{3}{c|}{\textbf{VGG-11}}             & \textbf{DeBERTa-base} \\ \midrule
\textbf{Client-Station}    & \textbf{Station-Server}   & PACS           & Office-Home    & \multicolumn{1}{c|}{TerraInc}       & PACS           & Office-Home    & TerraInc       & Amazon Reviews   \\ \midrule
\multirow{2}{*}{FedAvg}    & +Avg                      & 87.4          & 75.9          & \multicolumn{1}{c|}{49.8}          & 86.3          & 74.6          & 50.4          & 75.1            \\
                           & +HFedATM                  & \textbf{89.6} & \textbf{78.9} & \multicolumn{1}{c|}{\textbf{52.9}} & \textbf{88.3} & \textbf{77.6} & \textbf{53.6} & \textbf{78.7}   \\ \midrule
\multirow{2}{*}{FedProx}   & +Avg                      & 87.8          & 76.3          & \multicolumn{1}{c|}{50.2}          & 86.8          & 75.1          & 50.8          & 75.5            \\
                           & +HFedATM                  & \textbf{89.9} & \textbf{79.3} & \multicolumn{1}{c|}{\textbf{53.3}} & \textbf{88.7} & \textbf{77.9} & \textbf{54.0} & \textbf{79.1}   \\ \midrule
\multirow{2}{*}{FedSR}     & +Avg                     & 89.4          & 78.6          & \multicolumn{1}{c|}{52.7}          & 88.3          & 77.3          & 53.8          & 77.3            \\
                           & +HFedATM                & \textbf{92.8} & \textbf{81.9} & \multicolumn{1}{c|}{\textbf{56.5}} & \textbf{91.7} & \textbf{80.7} & \textbf{57.3} & \textbf{81.2}   \\ \midrule
\multirow{2}{*}{FedIIR}    & +Avg                     & 89.7          & 78.9          & \multicolumn{1}{c|}{53.1}          & 88.7          & 77.6          & 54.3          & 77.9            \\
                           & +HFedATM                & \textbf{93.0} & \textbf{82.2} & \multicolumn{1}{c|}{\textbf{56.9}} & \textbf{91.9} & \textbf{80.9} & \textbf{57.6} & \textbf{81.9}   \\ \bottomrule
\end{tabular}
}
\caption{Average accuracy ($\%$) comparison of FedDG methods with and without HFedATM across ResNet-18, VGG-11 (vision tasks), and DeBERTa-base (NLP task).}
\label{tab:acc_arch}
\end{table*}

\paragraph{Robustness across diverse tasks} We report the overall performance across four FedDG benchmarks in Table~\ref{tab:acc}. For vision tasks, FedAvg demonstrates limited generalization capability due to the absence of DG mechanisms. FedProx marginally improves performance over FedAvg by alleviating data heterogeneity, but its overall effectiveness remains constrained. In contrast, FedDG methods outperform FedAvg and FedProx due to their ability to induce more coherent client-level representations. Our HFedATM consistently enhances performance across these FedDG baselines, aligning closely with our theoretical insights from Theorem~\ref{theo:hfedatm_bound}. Specifically, it indicates that the generalization bound of HFedATM relies on the initial divergences $\eta^{(0)}$ and breadths $\omega^{(0)}$; when client-level models (FedAvg, FedProx) produce a lot of heterogeneous representations (e.g., $\lambda \leq 0.1$ for Office-Home and TerraInc), these terms remain large, limiting HFedATM's effectiveness. This empirical observation emphasizes HFedATM's dependency on strong client-level DG approaches to yield significant performance gains. Nevertheless, this scenario does not represent an inherent limitation since stronger client-side DG algorithms, such as FedSR or FedIIR, effectively reduce these initial divergences, showing HFedATM's benefits. Notably, for the NLP task (Amazon Reviews), HFedATM consistently outperforms baseline methods irrespective of heterogeneity, underscoring its robustness and general applicability across diverse modalities.

\paragraph{Robustness compared to other HFL methods} Recent HFL algorithms, MTGC~\cite{fang2024hierarchical} and FedRC~\cite{kou2024fedrc}, address data heterogeneity but either involve heavy computation (MTGC) or exchange distributional moments that risk privacy leakage (FedRC). Figure~\ref{fig:hfl_compare} benchmarks these algorithms against HFedATM on four datasets. Our method achieves the highest accuracy on both datasets while remaining data-free and computationally lightweight.

\begin{table}[ht]
\centering

{\small
\begin{tabular}{c|l|cc|c}
\toprule
\multicolumn{1}{c|}{\multirow{2}{*}{\textbf{Dataset}}} & \multirow{2}{*}{\textbf{Method}} & \multicolumn{2}{c|}{\textbf{Training Time (s)}} & \multirow{2}{*}{\textbf{Increase (\%)}} \\ \cmidrule{3-4}
                                  &                                   & \textbf{$+\textbf{Avg}$} & \textbf{$+\textbf{HFedATM}$} &                                                      \\ \midrule
\multirow{4}{*}{PACS}             
                                  & FedAvg                            & 20.4             & 22.1             & 8.4                                                 \\
                                  & FedProx                           & 21.3             & 23.2             & 8.8                                                 \\
                                  & FedSR                             & 22.6             & 24.4             & 7.9                                                 \\
                                  & FedIIR                            & 22.1             & 23.9             & 8.2                                                 \\ \midrule
\multirow{4}{*}{Office-Home}      
                                  & FedAvg                            & 35.7             & 37.3             & 4.7                                                 \\
                                  & FedProx                           & 36.4             & 38.2             & 5.0                                                 \\
                                  & FedSR                             & 37.3             & 39.1             & 4.9                                                 \\
                                  & FedIIR                            & 37.4             & 39.2             & 4.9                                                 \\ \midrule
\multirow{4}{*}{TerraInc}   
                                  & FedAvg                            & 39.5             & 41.3             & 4.5                                                 \\
                                  & FedProx                           & 40.6             & 42.4             & 4.5                                                 \\
                                  & FedSR                             & 41.9             & 43.8             & 4.3                                                 \\
                                  & FedIIR                            & 41.5             & 43.4             & 4.6                                                 \\ \midrule
\multirow{4}{*}{\begin{tabular}[c]{@{}c@{}}Amazon \\ Review\end{tabular}}   
                                  & FedAvg                            & 60.4             & 62.2             & 2.8                                                 \\
                                  & FedProx                           & 61.4             & 63.3             & 3.1                                                 \\
                                  & FedSR                             & 62.7             & 64.5             & 2.9                                                 \\
                                  & FedIIR                            & 62.2             & 64.0             & 2.9                                                 \\ \bottomrule
\end{tabular}
}
\caption{Training time (in seconds) of various FedDG approaches, with and without attached HFedATM, per communication round, averaged over unseen domains.}
\label{tab:latency}
\end{table}

\begin{table}[ht]
\centering
{\small
\begin{tabular}{l|cccc}
\toprule
\multicolumn{1}{c|}{\multirow{1}{*}{\textbf{Variant}}}            & \textbf{PACS} & \textbf{Office-Home} & \textbf{TerraInc} & \begin{tabular}[c]{@{}c@{}}\textbf{Amazon} \\ \textbf{Reviews}\end{tabular} \\ \midrule
\multicolumn{5}{c}{\textbf{FedAvg + HFedATM}}                                                \\ \midrule
w/o FOT                      & 76.9          & 58.7                & 41.1                   &  72.2                  \\
w/o RegMean                  & 77.7          & 59.1                & 42.4                   & 75.4                   \\
\textbf{Full Method}         & \textbf{78.3} & \textbf{60.5}       & \textbf{43.2}          & \textbf{78.5}          \\ \midrule
\multicolumn{5}{c}{\textbf{FedProx + HFedATM}}                                               \\ \midrule
w/o FOT                      & 75.6          & 61.4                & 42.1                   & 76.3                   \\
w/o RegMean                  & 78.0          & 61.7                & 41.7                   & 74.2                   \\
\textbf{Full Method}         & \textbf{79.0} & \textbf{62.3}       & \textbf{44.2}          & \textbf{79.3}          \\ \midrule
\multicolumn{5}{c}{\textbf{FedSR + HFedATM}}                                                 \\ \midrule
w/o FOT                      & 80.2          & 63.2                & 44.2                   & 73.0                   \\
w/o RegMean                  & 79.7          & 65.9                & 45.3                   & 76.1                   \\
\textbf{Full Method}         & \textbf{81.3} & \textbf{67.7}       & \textbf{47.5}          & \textbf{81.0}          \\ \midrule
\multicolumn{5}{c}{\textbf{FedIIR + HFedATM}}                                                \\ \midrule
w/o FOT                      & 78.5          & 65.7                & 44.6                   & 78.3                   \\
w/o RegMean                  & 77.4          & 65.8                & 45.8                   & 77.3                   \\
\textbf{Full Method}         & \textbf{82.2} & \textbf{67.6}       & \textbf{48.8}          & \textbf{81.6}          \\ \bottomrule
\end{tabular}
}
\caption{Ablation study of HFedATM components. Results for the full HFedATM method are obtained with $\lambda = 1.0$.}
\label{tab:ablation}
\end{table}

\paragraph{Robustness across different architectures}
To confirm the general applicability of HFedATM, we evaluated its robustness across diverse architectures. Specifically, we conducted additional experiments using ResNet-18~\cite{he2016deep} and VGG-11~\cite{simonyan2014very} for vision tasks (PACS, Office-Home, and TerraInc) and DeBERTa-base~\cite{he2020deberta} for NLP tasks (Amazon Reviews). As summarized in Table~\ref{tab:acc_arch}, FedDG methods consistently show better performance when integrated with HFedATM across these architectures and datasets, demonstrating their effectiveness regardless of network structure.

\begin{figure}[ht]
\centering
\includegraphics[width=0.9\linewidth]{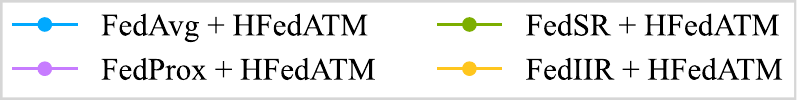}\\
\vspace*{0.2cm}
\includegraphics[width=0.96\linewidth]{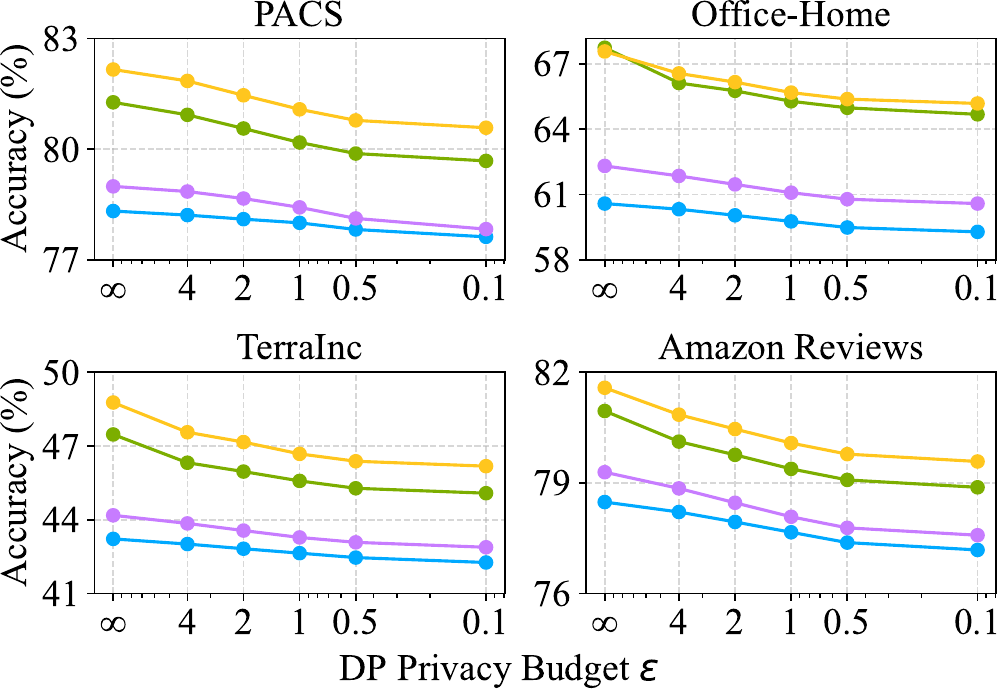}
\caption{Effect of varying DP budgets on the accuracy of HFedATM integrated with different baselines. Results show that HFedATM consistently maintains stable and robust performance even under various privacy constraints.}
\label{fig:privacy}
\end{figure}

\paragraph{Training latency} In practical scenarios such as smartphone apps~\cite{shin2023fedtherapist} and smart surveillance systems~\cite{pang2023federated}, rapid communication between servers and client devices is critical. We thus measure the wall-clock training latency (time per round) across all four datasets. As shown in Table~\ref{tab:latency}, HFedATM incurs a moderate latency overhead of less than $10\%$ per round while consistently providing improved accuracy. This trade-off is primarily driven by additional computations involved in our aggregation method, including the FOT Alignment and RegMean procedures. In particular, the overhead remains minimal due to the computational efficiency of FOT Alignment through the Sinkhorn algorithm~\cite{cuturi2013sinkhorn}, which requires only tens of milliseconds per convolutional layer on standard CPUs. Additionally, the Gram matrices required for RegMean can be computed efficiently in a single forward pass during the final station epoch before server aggregation, further controlling computational demands. Thus, HFedATM effectively balances a slight increase in training latency with extensive gains in DG performance.

\paragraph{Privacy robustness} Sharing Gram matrices, which capture second-order feature correlations, poses potential privacy risks. To address this, we evaluate the robustness of HFedATM by injecting Gaussian noise into Gram matrices to ensure $(\varepsilon, \delta)$-DP, with privacy budgets $\varepsilon \in \{4, 2, 1, 0.5, 0.1\}$ (where $\varepsilon = \infty$ denotes the nonprivate scenario). Figure~\ref{fig:privacy} illustrates the performance in multiple datasets, including PACS, Office-Home, TerraInc, and Amazon Reviews. HFedATM consistently maintains stable performance with moderate DP noise ($\varepsilon \ge 1$), demonstrating an accuracy drop of less than $2\%$ on all datasets. Even under strict privacy guarantees ($\varepsilon = 0.1$), HFedATM exhibits a graceful degradation and still outperforms baselines across vision and NLP tasks.

\section{Ablation Study}

To analyze the importance of each component in HFedATM, we performed ablation studies on four datasets. Table~\ref{tab:ablation} presents the accuracy results when disabling FOT Alignment or RegMean individually and also simultaneously. We find that omitting either component reduces accuracy, while removing both leads to a larger drop. This indicates that FOT and RegMean play complementary roles, each addressing distinct aspects of DG.

\section{Limitations}

\paragraph{Different Client Architectures} A straightforward limitation of HFedATM is its reliance on homogeneous client models, i.e., it assumes all clients and stations train identical architectures. However, this assumption aligns with existing literature in HFL, where model homogeneity is standard practice to facilitate effective hierarchical aggregation~\cite{liu2020client, fang2024hierarchical, guo2023fedrc}. Importantly, in this common scenario, HFedATM extensively increases DG capability compared to naive averaging. Exploring adaptations to accommodate heterogeneous models remains a compelling future direction, as the real-world FL setting frequently involves diverse device capabilities and thus heterogeneous architectures~\cite{ye2023heterogeneous}.

\paragraph{Data Privacy} The computation of inner product matrices required by RegMean raises potential privacy concerns. Specifically, Gram matrices ($G = X^\top X$) could reveal sensitive information if attackers attempt inversion to reconstruct client-side data. However, we show in Appendix~\ref{sec:privacy_analysis} that Gram matrices inherently possess a fundamental property of non-invertibility. Thus, the privacy risk associated with transmitting Gram matrices in HFedATM is substantially mitigated, ensuring a favorable trade-off between privacy preservation and model robustness.

\section{Conclusion}

This paper first explores the implementation of FedDG in the HFL setting. To effectively tackle domain shift, we proposed HFedATM, a data-free hierarchical aggregation method that combines FOT alignment and RegMean Aggregation. Theoretical analysis revealed that HFedATM provides a tighter generalization-error bound, highlighting how this hierarchical aggregation impacts DG performance. Empirical results in multiple benchmark datasets demonstrated that HFedATM substantially enhances existing FedDG methods, particularly when robust DG methods are already employed in the training stage of the client station. Moreover, we confirmed the efficiency of HFedATM in terms of communication latency and its robustness against privacy constraints, showing only minimal accuracy degradation under strong DP guarantees. Future work may address current limitations by exploring model adaptation techniques to enable the aggregation of heterogeneous client architectures and further improve privacy guarantees to mitigate potential risks associated with sharing intermediate statistics.


\bibliography{main}

\newpage

\appendix

\section{Detailed Theoretical Analysis}
\label{sec:theo_proof}
\subsection{Generalization-error bound}

We firstly need the following definitions and assumptions:

\begin{definition}
Let $H$ be any hypothesis class whose loss $\ell: \mathcal Y\times\mathcal Y \to [0,1]$ is $L$-Lipschitz and $\gamma$-Holder-continuous. For two marginal distributions $P_X,Q_X$ we denote the $H$-divergence by

$$
d_H(P_X,Q_X)=2\sup_{h\in H}\bigl|\Pr_{P_X}[h(x)=1]-\Pr_{Q_X}[h(x)=1]\bigr|.
$$
\end{definition}

\begin{assumption}[Bounded and Lipschitz Loss]
\label{assumption:lipschitz}
The loss function $\ell: \mathcal{Y}\times\mathcal{Y}\to[0,1]$ is bounded and $L$-Lipschitz continuous in its first argument, meaning there exists a constant $L>0$ such that for all $y, u, v$:
$$|\ell(u, y)-\ell(v, y)\leq L\lVert u-v \rVert.$$
\end{assumption}

\begin{assumption}[Holder Continuity]
\label{assumption:holder}
The loss function $\ell(f(u), y)$ is $\gamma$-Holder continuous in its input with constant $L_\gamma$, i.e., there exists $\gamma\in(0,1]$ such that for all $y, u, v$:
$$
|\ell(f(u),y)-\ell(f(v),y)| \leq L_\gamma \|u - v\|^\gamma.
$$
\end{assumption}

\begin{definition}
Let $E=\{1,\dots,N_E\}$ be the set of stations and $C_e=\{1,\dots,N_e\}$ the clients managed by the station $e$. At each round, a subset $\hat{C}_e\subseteq C_e$ of active clients is selected. Let $S_{e,i}=\{(x_{e,i}^{(j)},y_{e,i}^{(j)})\}_{j=1}^{n_{e,i}}\sim P^{e,i}_{XY}$ denote the client-level source domains, and these distributions are typically heterogeneous (domain shift). The stations do not hold or access any raw data. Instead, they aggregate models trained by their associated clients. Let $P^{\star}_{XY}$ be an unseen target domain such that $P^{\star}_{XY}\neq P^{e,i}_{XY} \forall e,i$. The HFedDG task seeks a hypothesis $h$ that minimizes the expected loss $D_{\mathrm{target}}(h)=\mathbb{E}_{(x,y)\sim P^{\star}_{XY}} [ \ell(h(x),y) ]$ without exposing client's data.
\end{definition}

\noindent We prove the Thereom~\ref{theo:hfeddg} as follows:


\begin{proof}
For every client distribution $P_{e,i}$, using the work of~\citet{ben2010theory}, we have
$$
D_{\mathrm{target}}(h)
 \le 
D_{e,i}(h)
 + \frac{1}{2} d_{H \Delta H}(P_{e,i},P^{\star})
 + \lambda_{H}(P_{e,i},P^{\star}).
$$
Because $\lambda_{H}(P_{e,i},P^{\star})\le\lambda_{H}(P^{\dagger},P^{\star})$ (choose the same optimal $h$), we replace the last term by $\lambda_{H}(P^{\dagger},P^{\star})$ for all clients. Multiply the above inequality by $\rho_{e,i}^{\star}$ and sum over $(e,i)$, we have:
\begin{equation}
\label{eq:theo_proof01}
\begin{aligned}
D_{\mathrm{target}}(h) &\le 
\sum_{e,i}\rho_{e,i}^{\star}D_{e,i}(h)\\
 &+ \frac{1}{2}\sum_{e,i}\rho_{e,i}^{\star} 
   d_{H \Delta H}(P_{e,i},P^{\star})\\
 &+ \lambda_{H}(P^{\dagger},P^{\star}).
\end{aligned}
\end{equation}
For every station $e$, choose a pivot client $i^{\dagger}(e)=\displaystyle\arg\min_{j\in\mathcal C_e}d_H(P^{\star},P_{e,j})$ and define the pivot distribution $P^{\dagger}_{e}\equiv P_{e,i^{\dagger}(e)}$. We further define the server pivot $P^{\dagger}_X\equiv\displaystyle \arg\min_{P\in\{P^{\dagger}_{e}\}}\sum_{e}\rho_{e}^{\star}d_H(P,P^{\star})$. For any client $(e,i)$,
\begin{equation}
\label{eq:theo_proof02}
\begin{aligned}
d_{H \Delta H}(P_{e,i},P^{\star})
 &\le 
d_H(P_{e,i},P^{\dagger}_{e})
+d_H(P^{\dagger}_{e},P^{\dagger})\\
&+d_H(P^{\dagger},P^{\star}).
\end{aligned}
\end{equation}
This is because the $H \Delta H$-divergence upper-bounds the ordinary $H$-divergence, and the latter obeys the triangle inequality. Recall that we have the intra-station term $d_H(P_{e,i},P^{\dagger}_{e})\le\omega_{e}$, the station–server term $d_H(P^{\dagger}_{e},P^{\dagger})\le\omega$, and the server–target term $d_H(P^{\dagger},P^{\star})=\eta$. Similarly, $d_H(P^{\dagger}_{e},P^{\star})=\eta_{e}$. Plugging these into Equation~\ref{eq:theo_proof02} and then into Equation~\ref{eq:theo_proof01}, we have

$$
\begin{aligned}
L_{\mathrm{target}}(h) &\le 
\sum_{e,i}\rho_{e,i}^{\star}L_{e,i}(h)\\
 &+\frac{1}{2}\Bigl[
\underbrace{\eta+  \sum_{e}\rho_e^{\star}\eta_{e}}_{\text{inner divergences}}
+
\underbrace{\omega+  \sum_{e}\rho_e^{\star}\omega_{e}}_{\text{breadths}}
\Bigr]\\
&+\lambda_{H}(P^{\dagger},P^{\star}).
\end{aligned}
$$
We have proved the theorem.
\end{proof}

\subsection{HFedATM's Convergence}
Building upon the HFedDG error bound, we propose the next lemma, sharpening our theoretical analysis by employing Holder continuity:
\begin{lemma}
Under Assumptions~\ref{assumption:lipschitz} and~\ref{assumption:holder}, for any measurable $f$ and distributions $P$, $Q$:
$$
\bigl|\mathbb E_P[f]-\mathbb E_Q[f]\bigr|
      \le \tfrac12 L^\gamma d_H(P,Q)^{ \gamma}.
$$
\end{lemma}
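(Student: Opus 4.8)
The plan is to reduce the statement to a straightforward combination of the two assumptions together with the defining variational formula for the $H$-divergence. First I would observe that the quantity $d_H(P,Q)$ is, up to the factor $2$, the supremum over $h\in H$ of $|\Pr_P[h=1]-\Pr_Q[h=1]|$; equivalently it controls the total-variation-type discrepancy that $H$ can witness. The natural route is therefore a coupling argument: pick a coupling $\pi$ of $P$ and $Q$ that nearly achieves the optimal transport / total-variation distance that $d_H$ bounds, write $\mathbb E_P[f]-\mathbb E_Q[f]=\mathbb E_{(u,v)\sim\pi}[f(u)-f(v)]$, and then bound the integrand.

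Second, on the integrand I would invoke Assumption~\ref{assumption:holder}: whenever the loss-composed map $f$ is $\gamma$-H\"older, $|f(u)-f(v)|\le L_\gamma\|u-v\|^{\gamma}$, so the expectation is at most $L_\gamma\,\mathbb E_\pi[\|u-v\|^{\gamma}]$. On the diagonal part of the coupling the contribution vanishes; on the off-diagonal part, whose $\pi$-mass is at most $\tfrac12 d_H(P,Q)$ (this is the point where the variational definition of $d_H$ enters, since a classifier in $H$ separating the supports certifies the mass), the integrand is bounded and, after absorbing constants and using $\gamma\le 1$ with the normalization implicit in the definitions, one gets the factor $d_H(P,Q)^{\gamma}$ rather than $d_H(P,Q)$ — this subunit exponent is exactly what H\"older continuity buys over Lipschitz continuity. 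Collecting the constants from Assumptions~\ref{assumption:lipschitz} and~\ref{assumption:holder} yields the prefactor $\tfrac12 L^{\gamma}$.

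Third, I would clean up the constant bookkeeping: the paper states $\ell\in[0,1]$ and both an $L$-Lipschitz and an $L_\gamma$-H\"older hypothesis, so I would make explicit which constant is meant to appear (presumably $L_\gamma$ is identified with $L^\gamma$ under their conventions, or $f$ is taken $1$-Lipschitz so the only surviving constant is from the H\"older step), and then the bound reads $|\mathbb E_P[f]-\mathbb E_Q[f]|\le\tfrac12 L^{\gamma} d_H(P,Q)^{\gamma}$ as claimed.

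The main obstacle I anticipate is getting the H\"older exponent to land on the divergence itself rather than on a transport cost: a naive Lipschitz bound gives $\tfrac12 L\, d_H(P,Q)$, and upgrading $d_H$ to $d_H^{\gamma}$ requires either Jensen's inequality on the concave map $t\mapsto t^{\gamma}$ applied to the off-diagonal mass, or a direct argument that $\mathbb E_\pi[\|u-v\|^{\gamma}]\le(\text{off-diagonal mass})^{1-\gamma}\cdot(\text{something bounded})^{\gamma}\le d_H(P,Q)^{\gamma}$ using boundedness of the domain or of $f$. Pinning down exactly which boundedness hypothesis is being used there — and making sure the constant really is $\tfrac12 L^{\gamma}$ and not, say, $L^{\gamma}$ — is the delicate part; the rest is routine.
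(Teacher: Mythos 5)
There is a genuine gap in your argument, and it sits exactly where you place the variational definition of $d_H$. You bound the off-diagonal mass of your coupling $\pi$ by $\tfrac12 d_H(P,Q)$, but this inequality runs in the wrong direction. The minimal off-diagonal mass over all couplings of $P$ and $Q$ is the total variation distance $\mathrm{TV}(P,Q)$, and the $\mathcal H$-divergence satisfies $d_H(P,Q)\le 2\,\mathrm{TV}(P,Q)$ because the supremum is restricted to classifiers in $H$ rather than all measurable sets. So a separating classifier $h\in H$ certifies a \emph{lower} bound on the off-diagonal mass, namely $\mathrm{TV}(P,Q)\ge\tfrac12 d_H(P,Q)$, which is the opposite of what your decomposition needs. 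Indeed, if the claim held for arbitrary measurable $f$ via your route, one could take $H$ to contain only constant classifiers, making $d_H\equiv 0$ while $|\mathbb E_P[f]-\mathbb E_Q[f]|$ stays bounded away from zero; the restriction to $H$ is precisely what makes $d_H$ a weaker divergence than total variation, and no coupling argument can recover TV-type control from it without an additional richness assumption on $H$.

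Your route is also genuinely different from the paper's. The paper does not use an optimal coupling or a diagonal/off-diagonal split at all: it picks the maximizing hypothesis $h^\star=\arg\max_{h\in H}|\Pr_P[h]-\Pr_Q[h]|$, bounds the expectation gap using the \emph{product} coupling $P\otimes Q$ with everything pushed through $h^\star$ (so the Hölder step is applied to $\|h^\star(x)-h^\star(x')\|$, not to $\|x-x'\|$), and then invokes Jensen's inequality on $t\mapsto t^\gamma$ together with the variational definition of $d_H$. That said, the paper's own final step — passing from $L\,\mathbb E\|h^\star(x)-h^\star(x')\|^\gamma$ to $\tfrac12 L^\gamma d_H(P,Q)^\gamma$, with $L$ silently becoming $L^\gamma$ — is itself not spelled out, and your concern about whether the constant should be $L_\gamma$, $L^\gamma$, or carry the factor $\tfrac12$ is well placed; but that bookkeeping issue is secondary to the directional error above, which would have to be repaired (e.g., by tying $f$ to compositions with hypotheses in $H$ as the paper implicitly does) before your proof could go through.
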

\begin{proof}
Let $h^\star=\arg\max_{h\in H}\bigl|\Pr_P[h]-\Pr_Q[h]\bigr|$.
By Hölder continuity, $\lvert f(u)-f(v)\rvert \le L\lVert u-v\rVert^{\gamma}$.
Applying the variational definition of $d_H$ and Jensen's inequality gives

$$
\begin{aligned}
\bigl|\mathbb E_P[f]-\mathbb E_Q[f]\bigr|
      & \le \mathbb E_{\substack{x\sim P\\x'\sim Q}}
             \bigl|f(h^\star(x))-f(h^\star(x'))\bigr|\\
      & \le L \mathbb E\lVert h^\star(x)-h^\star(x')\rVert^{\gamma}\\
      & \le \tfrac12 L^\gamma d_H(P,Q)^{ \gamma}.
\end{aligned}
$$
We have proved the lemma. 
\end{proof}
Next, since our framework employs FOT Alignment, it is crucial to confirm that the alignment is invariant under permutation. The following lemma verifies this property.

\begin{lemma}
\label{lemma:perm_inv}
Let $D_{\text{OT}}(W_1,W_2)=\min_{\Pi\in\mathcal U}\langle\Pi,C\rangle$. For any permutation matrix $P$,
$$
D_{\mathrm{OT}}(PW_1, PW_2)=D_{\mathrm{OT}}(W_1,W_2).
$$
\end{lemma}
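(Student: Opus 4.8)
The plan is to show that the optimal transport cost between two filter banks is unchanged when both banks undergo the same permutation $P$, by exhibiting an explicit bijection between feasible couplings that preserves the objective value. First I would recall that $W_1$ and $W_2$ are treated as discrete uniform distributions supported on their $k$ normalized kernel vectors, so that relabeling the support points of both by the same permutation $P$ moves the mass around but keeps the pairwise cost structure rigidly intact: the cost of transporting row $a$ of $PW_1$ to row $b$ of $PW_2$ equals the cost of transporting row $P^{-1}(a)$ of $W_1$ to row $P^{-1}(b)$ of $W_2$. In symbols, if $C'$ is the cost matrix for the permuted banks, then $C' = P C P^{\top}$.

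The key steps, in order, are as follows. (1) Verify the cost-matrix identity $C'(a,b) = \|\widetilde w_1[P^{-1}(a)] - \widetilde w_2[P^{-1}(b)]\|_2^2 = C(P^{-1}(a), P^{-1}(b))$, i.e. $C' = PCP^{\top}$; this is immediate since a permutation of the rows of a weight matrix permutes the flattened, $\ell_2$-normalized kernel vectors in the same way. (2) Establish a bijection on the Birkhoff polytope $\mathcal U$: the map $\Pi \mapsto P\Pi P^{\top}$ sends $\mathcal U$ onto itself, because doubly-stochasticity and nonnegativity are preserved under left/right multiplication by permutation matrices (using $P\mathbf 1 = \mathbf 1$ and $P^{\top}\mathbf 1 = \mathbf 1$), and the map is its own inverse up to $P \leftrightarrow P^{\top}$. (3) Compute the objective under this change of variables: $\langle \Pi, C' \rangle = \langle \Pi, PCP^{\top}\rangle = \operatorname{tr}(\Pi^{\top} P C P^{\top}) = \operatorname{tr}((P^{\top}\Pi P)^{\top} C) = \langle P^{\top}\Pi P, C\rangle$, using cyclicity of the trace. (4) Conclude that minimizing $\langle \Pi, C'\rangle$ over $\Pi \in \mathcal U$ equals minimizing $\langle \Pi', C\rangle$ over $\Pi' = P^{\top}\Pi P \in \mathcal U$, so the two minima coincide, which is exactly $D_{\mathrm{OT}}(PW_1, PW_2) = D_{\mathrm{OT}}(W_1, W_2)$.

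I do not anticipate a genuine obstacle here; the result is essentially bookkeeping about how permutations act on the cost matrix and on doubly-stochastic matrices. The one place to be careful is step (2)--(3): one must make sure the conjugation $\Pi \mapsto P\Pi P^{\top}$ (versus $P^{\top}\Pi P$) is applied consistently so that the trace manipulation in step (3) lands back on the original cost $C$, and that the map genuinely stays inside the Birkhoff polytope rather than merely the set of nonnegative matrices. Stating the change of variables as an explicit bijection $\mathcal U \to \mathcal U$, rather than just bounding one minimum by the other, makes the equality (as opposed to an inequality) fall out cleanly. An entirely analogous argument would also cover the case where different permutations act on $W_1$ and $W_2$, but the lemma as stated only needs the common-$P$ version, so I would keep the proof to that case.
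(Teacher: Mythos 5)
Your proposal is correct and follows essentially the same route as the paper: identify the permuted cost matrix as $C'(a,b)=C(P^{-1}(a),P^{-1}(b))$, exhibit the re-indexing $\Pi\mapsto P\Pi P^{\top}$ as a cost-preserving bijection of the Birkhoff polytope, and conclude the minima coincide. The only difference is cosmetic — you phrase the change of variables in matrix/trace notation where the paper uses explicit index relabeling — so there is nothing substantive to add.
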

\begin{proof}
Premultiply $W_1$ and $W_2$ by the same permutation matrix $P$.
The new cost matrix becomes
$$
\begin{aligned}
C'_{ab}
  &=\bigl\| (PW_1)_{a}- (PW_2)_{b}\bigr\|_2^{2}\\
  &=\bigl\| w_{1,P^{-1}(a)} - w_{2,P^{-1}(b)}\bigr\|_2^{2}\\
  & = C_{P^{-1}(a) P^{-1}(b)}.
\end{aligned}
$$
For any admissible plan $\Pi\in\mathcal{U}$ define
$$
\Pi'_{ab}  =  \Pi_{P^{-1}(a) P^{-1}(b)}.
$$
Because rows and columns are merely re‑indexed, $\Pi'$ satisfies the same marginals $\Pi'\mathbf{1}=\frac{1}{c}\mathbf{1}$ and $(\Pi')^{\!\top}\mathbf{1}=\tfrac1c\mathbf{1}$;
hence $\Pi'\in\mathcal{U}$. Moreover, the mapping $\Pi\mapsto\Pi'$ is a bijection of $\mathcal{U}$. Using the change of indices,
$$
\begin{aligned}
\langle\Pi',C'\rangle
   &= \sum_{a,b}\Pi_{P^{-1}(a) P^{-1}(b)} 
                     C_{P^{-1}(a) P^{-1}(b)} \\
   &= \sum_{u,v}\Pi_{uv} C_{uv}
   = \langle\Pi,C\rangle.
\end{aligned}
$$
Because every feasible $\Pi$ for $(W_1,W_2)$ maps to a feasible $\Pi'$ for $(PW_1,PW_2)$ with identical cost, the minimal costs coincide:
$$
\begin{aligned}
D_{\mathrm{OT}}(PW_1,PW_2)
      &=\min_{\Pi'\in\mathcal{U}}\langle\Pi',C'\rangle\\
      &=\min_{\Pi\in\mathcal{U}}\langle\Pi,C\rangle\\
      &=D_{\mathrm{OT}}(W_1,W_2).
\end{aligned}
$$
We have proved the lemma.
\end{proof}

\noindent Integrating previous lemmas, we show that HFedATM's design choices lead to a tighter error bound by proving Theorem~\ref{theo:hfedatm_bound}:


\begin{proof}
Let $\omega^{(r-1)}$ and $\omega_e^{(r-1)}$ be the breadths before FOT in round $r$. We have that FOT finds the optimal permutation $\Pi^{\star}$ minimizing entropic‑OT cost
$$
\langle\Pi^{\star},C\rangle
   = 
\min_{\Pi\in\mathcal{U}}
 \langle\Pi,C\rangle
 \le  \tfrac{1}{c} \omega^{(r-1)}B^{2}
$$
(bounded because features are bounded by $B$). Let $\beta_{\text{FOT}} := \frac{\langle\Pi^{\star},C\rangle}{\langle I,C\rangle} \in (0,1)$. Then, after permuting filters identically at all stations,

\begin{equation}
\omega^{(r)}  =  (1-\beta_{\text{FOT}}) \omega^{(r-1)},
\quad
\omega_e^{(r)}  =  (1-\beta_{\text{FOT}}) \omega_e^{(r-1)},
\end{equation}
where Lemma~\ref{lemma:perm_inv} guarantees the contraction is permutation‑invariant. We recall that RegMean solves the closed‑form minimizer.
$$
W^{(r)}  =  \alpha \bar{W}^{(r-1)} + (1-\alpha) W^{(r-1)},
$$
where $\bar{W}^{(r-1)}$ is the average filter after FOT and $\alpha\in(0,1)$. Because $\ell$ is $L$-Lipschitz and Holder‑continuous, the change in feature space distance contracts by at least the factor $1-\alpha$:
\begin{equation}
\eta^{(r)} = (1-\alpha) \eta^{(r-1)},
\quad
\eta_e^{(r)} = (1-\alpha) \eta_e^{(r-1)}.
\end{equation}
Let $\beta := 1-(1-\beta_{\text{FOT}})(1-\alpha)$. Then both breadth and divergence terms satisfy
\begin{equation}
\label{eq:roll}
\omega^{(r)}\le(1-\beta) \omega^{(r-1)},\quad
\eta^{(r)}\le(1-\beta) \eta^{(r-1)}.
\end{equation}
Because $\beta_{\text{FOT}}>0$, we have $0<\beta<1$. Starting from initial values at $r=0$ and unrolling Equation~\ref{eq:roll} for $R$ rounds gives
$$
\eta^{(R)}\le(1-\beta)^{R}\eta^{(0)},\quad
\omega^{(R)}\le(1-\beta)^{R}\omega^{(0)},
$$
and likewise for $\eta_e^{(R)},\omega_e^{(R)}$. Insert these into Theorem~\ref{theo:hfeddg}:
$$
\begin{aligned}
D_{\text{target}}\!\bigl(h^{(R)}\bigr)
  &\le
    \varepsilon_{\text{local}}
   +\frac{1}{2}\Bigl[
          (1-\beta)^{R}\bigl(\eta^{(0)}+\omega^{(0)}\bigr)\Bigr.\\
          &+\Bigl.(1-\beta)^{R}\!\sum_{e}\rho_e^{\star}\bigl(\eta_e^{(0)}+\omega_e^{(0)}\bigr)
        \Bigr]\\
   &+\lambda_{H}(P^{\star}_X,P^{\dagger}_X).
\end{aligned}
$$
We complete the proof.
\end{proof}

\begin{table*}[t]
\centering
\resizebox{\linewidth}{!}{
\begin{tabular}{cccccccccccccccc}
\toprule
\textbf{Data}                   & \textbf{Model} & $\mathbf{E}$ & $\mathbf{N}$ & $\mathbf{B}$ & $\mathbf{S}$ & $\mathbf{C/S}$ & \textbf{Opt} & \textbf{LR}$\mathbf{_{0}}$         & \textbf{Mom} & \textbf{WD} & \textbf{Sched} & $\mathbf{\lambda_{\text{OT}}}$ & $\mathbf{\alpha}$ & $\mathbf{n_{\text{iter}}}$ & \textbf{Rounds} \\ \midrule
\multirow{3}{*}{PACS}           & LeNet-5        & 10           & 5            & 32  & 10  & 10    & SGD          & 0.01             & 0.00         & 0.00        & cosine         & 0.05                  & 0.75     & 25                & 200             \\
                                & ResNet-18      & 10           & 5            & 32  & 10  & 10    & SGD          & 0.01             & 0.00         & 0.00        & cosine         & 0.05                  & 0.75     & 25                & 200             \\
                                & VGG-11         & 10           & 5            & 32  & 10  & 10    & SGD          & 0.01             & 0.00         & 0.00        & cosine         & 0.05                  & 0.75     & 25                & 200             \\ \midrule
\multirow{3}{*}{Office-Home}    & LeNet-5        & 10           & 5            & 32  & 10  & 10    & SGD          & 0.01             & 0.00         & 0.00        & cosine         & 0.05                  & 0.75     & 25                & 200             \\
                                & ResNet-18      & 10           & 5            & 32  & 10  & 10    & SGD          & 0.01             & 0.00         & 0.00        & cosine         & 0.05                  & 0.75     & 25                & 200             \\
                                & VGG-11         & 10           & 5            & 32  & 10  & 10    & SGD          & 0.01             & 0.00         & 0.00        & cosine         & 0.05                  & 0.75     & 25                & 200             \\ \midrule
\multirow{3}{*}{TerraInc} & LeNet-5        & 10           & 5            & 32  & 10  & 10    & SGD          & 0.01             & 0.00         & 0.00        & cosine         & 0.05                  & 0.75     & 25                & 200             \\
                                & ResNet-18      & 10           & 5            & 32  & 10  & 10    & SGD          & 0.01             & 0.00         & 0.00        & cosine         & 0.05                  & 0.75     & 25                & 200             \\
                                & VGG-11         & 10           & 5            & 32  & 10  & 10    & SGD          & 0.01             & 0.00         & 0.00        & cosine         & 0.05                  & 0.75     & 25                & 200             \\ \midrule
\multirow{2}{*}{Amazon Reviews} & RoBERTa-base        & 10           & 5            & 32  & 10  & 10    & AdamW        & $3\times10^{-5}$ & 0.90         & $10^{-2}$   & cosine         & 0.05                  & 0.75     & 25                & 200             \\
                                & DeBERTa-base        & 10           & 5            & 32  & 10  & 10    & AdamW        & $3\times10^{-5}$ & 0.90         & $10^{-2}$   & cosine         & 0.05                  & 0.75     & 25                & 200             \\ \bottomrule
\end{tabular}
}
\caption{Hyper-parameters and implementation details used in our experiments.}
\label{tab:hyperparams_detailed}
\end{table*}

\begin{table}[ht]
\centering
\resizebox{\linewidth}{!}{
\begin{tabular}{ccc}
\toprule
\textbf{Group} & \textbf{Column} & \textbf{Meaning} \\
\midrule
\multirow{2}{*}{\textbf{Data/Model}} & Data & Dataset name \\
                                     & Model & Backbone architecture \\
\midrule
\multirow{4}{*}{\textbf{Federation}} & $E$ & Local epochs per station round \\
                                     & $N$ & Station rounds per server round \\
                                     & $B$ & Batch size \\
                                     & $S$ & Number of stations \\
                                     & $C/S$ & Clients per station \\
\midrule
\multirow{5}{*}{\textbf{Optimisation}} & Opt & Optimiser (SGD/AdamW) \\
                                       & LR$_{0}$ & Initial learning rate \\
                                       & Mom & Momentum or $\beta_{1}$ \\
                                       & WD & Weight decay \\
                                       & Sched & Learning rate scheduler (step/cosine) \\
\midrule
\multirow{3}{*}{\textbf{HFedATM-specific}} & $\lambda_{\text{OT}}$ & Sinkhorn regularizer \\
                                           & $\alpha$ & RegMean shrinkage parameter \\
                                           & $n_{\text{iter}}$ & Sinkhorn iterations \\
\midrule
\multirow{1}{*}{\textbf{Runtime}} & Rounds & Global rounds \\

\bottomrule
\end{tabular}
}
\caption{Legend explaining each column of the hyper-parameter table.}
\label{tab:legend_hyperparam}
\end{table}

\begin{algorithm}[!ht]
\caption{\textbf{HFedATM}: \underline{HFed}DG via Optim\underline{A}l \underline{T}ransport and regularized \underline{M}ean aggregation}
\label{algo:hfedatm}
\DontPrintSemicolon
\SetKwInOut{Input}{Input}\SetKwInOut{Output}{Output}
\Input{global rounds $R$, station rounds $N$, client epochs $E$, learning rate $\eta$, shrinkage $\alpha$, OT regularizer $\varepsilon$.}
\Output{global model $h^{(R)}_{\mathrm{ATM}}$.}

\For(\tcp*[f]{\small server initialization}){$r \leftarrow 1$ \KwTo $R$}{

{\tcp*[l]{\footnotesize \textbf{Step 0: broadcast}}}
The server multicasts previous global model $h^{(r-1)}$ to every station $e$. 

{\tcp*[l]{\footnotesize \textbf{Step 1: client training}}}
\ForEach{station $e$}{
  \ForEach{$n \leftarrow 1$ \KwTo $N$}{
      
      Select client subset $\hat{C}_e$.
      
      \ForEach{$i \in \hat{C}_e$}{
         Pull $h^{(r-1)}$ and train $E$ epochs with any FedDG algorithm.
  
         Send $\theta_{e,i}$ to station $e$.

         \If{$n==N$}{
            Each $l\in\mathcal{L}_{\mathrm{lin}}$, calculate Gram $G_{e,i}^{(l)}$ and send to $e$.
         }
     }
  }
}

{\tcp*[l]{\footnotesize \textbf{Step 2: station aggregation}}}
\ForEach{station $e$}{
   Aggregate $\{\theta_{e,i}\}$ with FedAvg or FedDG approaches $\rightarrow$ station model $h_{e}$.
   
   Get station Gram $G_{e}^{(l)} \leftarrow \frac{1}{|\mathcal S_e|}\sum_{i} G_{e,i}^{(l)}$.
   
   Shrink $\widehat G_{e}^{(l)} \leftarrow \alpha G_{e}^{(l)} + (1-\alpha)\operatorname{diag}(G_{e}^{(l)})$.
   
   Send $(h_e, \widehat G_{e}^{(l)})$ to the server.
}

{\tcp*[l]{\footnotesize \textbf{Step 3: server FOT Alignment}}}
\ForEach{$l \in \mathcal L_{\mathrm{conv}}$}{
   Station 1 is reference.  
   
   \For{$e = 2$ \KwTo $N_E$}{
      Calculate index $\Pi_{1,e}^{(l)}$ and send back to $e$.
   }
}

{\tcp*[l]{\footnotesize \textbf{Step 4: model merging}}}
\ForEach{$l \in \mathcal L_{\mathrm{conv}}$}{
   $\displaystyle
   \overline{W}^{(l)} \leftarrow
   \frac{\sum_{e} \gamma_{e} W_{e}^{(l)}}{\sum_{e}\gamma_{e}}
   \text{ with }\gamma_{e}=|\mathcal S_e|.
   $
}

\ForEach{$l \in \mathcal L_{\mathrm{lin}}$}{
   $W_{\mathrm{ATM}}^{(l)} \leftarrow
      \bigl(\sum_{e} \widehat G_{e}^{(l)}\bigr)^{-1}
      \bigl(\sum_{e} \widehat G_{e}^{(l)} \widetilde W_{e}^{(l)}\bigr)$.
}

{\tcp*[l]{\footnotesize \textbf{Step 5: assemble \& broadcast}}}
Assemble $h^{(r)}_{\mathrm{ATM}}$ from $\{\overline W^{(l)}\}_{\mathrm{conv}}$ and $\{W_{\mathrm{ATM}}^{(l)}\}_{\mathrm{lin}}$ and broadcast $h^{(r)}_{\mathrm{ATM}}$ to stations.

}

\end{algorithm}

\section{Integrated Algorithm}
\label{sec:algo}

The complete HFedATM workflow couples the three computation tiers: clients, stations, and the server, into a single synchronous loop. Algorithm~\ref{algo:hfedatm} displays the procedure; the subsequent paragraphs clarify what is computed at each tier, when communication occurs, and how privacy is preserved.

\section{Detailed Experimental Setup}
\label{sec:detailed_exp}

\paragraph{Baseline Methods}  To comprehensively evaluate the performance of our proposed HFedATM method, we consider baseline approaches structured according to the two-stage HFL scenario. \textbf{At the client-station communication}, we first adopt FedAvg~\cite{mcmahan2017communication} as our baseline since it is the standard aggregation protocol in FL. To directly address the challenge of data heterogeneity, we further incorporate FedProx~\cite{li2020federated} which introduces a proximal regularization term into the local training objective, constraining client model updates and ensuring a more stable convergence under heterogeneous data conditions. Additionally, to assess DG capabilities, we select two FedDG baselines representing different categories: FedSR~\cite{nguyen2022fedsr}, a regularization-based technique, and FedIIR~\cite{guo2023out}, an aggregation-based method. FedSR applies feature-level regularization, promoting simplicity and domain-invariant representations across multiple clients, while FedIIR performs gradient alignment across client models, facilitating invariant feature learning robust against distributional shifts. \textbf{At the station-server communication}, we use standard weight averaging (Avg) as our baseline aggregation method. All baseline implementations and comparisons were conducted using the established FL framework provided by~\citet{tan2023pfedsim}.

\paragraph{Datasets} Our experiments were conducted across diverse datasets commonly employed in DG studies: \textbf{PACS}~\cite{li2017deeper}, featuring stylistically varied images across photo, art-painting, cartoon, and sketch domains; \textbf{Office-Home}~\cite{venkateswara2017deep}, comprising images across art, clipart, product, and real-world domains with a rich class diversity; \textbf{TerraInc}~\cite{beery2018recognition}, containing camera-trap images from distinct wildlife locations; and \textbf{Amazon Reviews}~\cite{balaji2018metareg}, involving sentiment classification across distinct product categories. This selection of datasets collectively provides comprehensive coverage of different data types, domains, and challenges to robustly evaluate our proposed method.

\paragraph{Heterogeneous Partitioning} To synthesize controllable non-IID data distributions across clients, we adopt the Heterogeneous Partitioning strategy proposed by~\citet{bai2023benchmarking}. Specifically, given $D$ domains (or classes) and $C$ clients, the algorithm first assigns a subset of domain indices $D_c$ to one or more clients, subsequently allocating samples to each client-domain pair $(d,c)$ according to:
\begin{equation}
n_{d,c}(\lambda)=\lambda\,\frac{n_d}{C}+(1-\lambda)\,\frac{\mathbf{1}[d\in D_c]\,n_d}{|\{c':d\in D_{c'}\}|},
\end{equation}
where $\lambda\in[0,1]$, and $n_d$ denotes the total number of samples in domain $d$. The parameter $\lambda$ controls the degree of heterogeneity: a value of $\lambda=1.0$ corresponds to an IID distribution, where each client receives data proportionally from all domains; conversely, $\lambda=0.0$ results in maximum heterogeneity, assigning each client exclusively to its designated domains. Intermediate values (e.g., $\lambda=0.1$) smoothly interpolate between these extremes, controlling domain-level imbalance.

\paragraph{Model Architectures} For vision datasets, we employed LeNet-5~\cite{lecun1998gradient} (a convolutional neural network with 7 layers, consisting of two convolutional layers followed by pooling operations and three fully-connected layers, totaling approximately 60,000 parameters), ResNet-18~\cite{he2016deep} (an 18-layer residual network architecture with convolutional and identity skip-connections, comprising around 11 million parameters), and VGG11~\cite{simonyan2014very} (an 11-layer convolutional neural network featuring eight convolutional layers followed by three fully-connected layers, amounting to approximately 133 million parameters). For NLP tasks, we utilized transformer-based language models, including RoBERTa-base~\cite{liu2019roberta} (Robustly Optimized BERT Pre-training Approach, a 12-layer transformer encoder with around 125 million parameters, known for improved masked-language modeling through enhanced training strategies) and DeBERTa-base~\cite{he2020deberta} (Decoding-enhanced BERT with disentangled attention, a 12-layer transformer encoder architecture comprising roughly 140 million parameters, notable for disentangling content and position representations within its attention mechanisms, significantly boosting model performance).

\paragraph{Hyper-parameter Tuning} All hyper-parameters, fixed constants, and runtime configurations used in our experiments are detailed in Table~\ref{tab:hyperparams_detailed}, with corresponding descriptions provided in Table~\ref{tab:legend_hyperparam}. Each row represents a specific dataset-model combination, while columns are organized into categories: FL protocols, optimization settings, privacy/regularization techniques, and infrastructure details. We conducted experiments on NVIDIA RTX 3090 GPUs, using three random seeds $\{0,1,2\}$) to ensure reproducibility. Additionally, these settings are provided in a structured, machine-readable format within the \texttt{configs/master-table.yaml} file included in our code release.


\section{Privacy Analysis}
\label{sec:privacy_analysis}
HFedATM aggregates only Gram matrices $G = X^\top X$ from client-side activations, inherently avoiding direct transmission of raw data or gradients. However, whether Gram matrices can leak sensitive information remains a concern. Here, we demonstrate theoretically that Gram matrices inherently protect against exact data inversion.

\begin{theorem}
Given a Gram matrix $G = X^\top X \in \mathbb{R}^{m\times m}$, the activation matrix $X \in \mathbb{R}^{d\times m}$ cannot be uniquely recovered.
\end{theorem}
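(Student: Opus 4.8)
The plan is to exhibit explicitly a family of distinct matrices that all share the same Gram matrix, thereby showing that the map $X \mapsto X^\top X$ is not injective. The key observation is that $G = X^\top X$ is invariant under the left action of the orthogonal group on $X$: for any $Q \in \mathbb{R}^{d\times d}$ with $Q^\top Q = I_d$, we have $(QX)^\top(QX) = X^\top Q^\top Q X = X^\top X = G$. Hence every element of the orbit $\{QX : Q^\top Q = I_d\}$ is a valid preimage of $G$, so knowledge of $G$ leaves the factor $X$ determined only up to an arbitrary orthogonal change of basis of the $d$-dimensional feature space.

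First I would note that the orthogonal group in dimension $d$ contains more than one element whenever $d \ge 1$: at minimum $I_d$ and $-I_d$, and for $d \ge 2$ a continuum of rotations. Consequently, provided $X \neq 0$, the matrices $X$ and $-X$ (or $QX$ for a suitable rotation $Q$) are genuinely distinct yet produce the same $G$; in fact, if $X$ has rank $r$, the preimage set is a coset space of the form $O(d)/O(d-r)$ (a Stiefel-type manifold), which is uncountably infinite whenever $d \ge 2$ and $d > r$, and at least two-element otherwise. Therefore no inversion procedure, however sophisticated, can recover $X$ from $G$ alone.

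I would close by remarking that this is precisely the phenomenon underlying classical multidimensional scaling: a Gram matrix pins down only the pairwise inner products of the columns of $X$ — equivalently, the geometry of the point configuration up to a global isometry fixing the origin — and never the coordinates themselves. The only caveat, which I would mention in passing, is the degenerate case $X = 0$, where the preimage is trivially unique; this case carries no information about client activations and so is irrelevant to the privacy claim.

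Since the argument is a direct construction, there is no real obstacle. The only thing to handle carefully is making the statement precise: "cannot be uniquely recovered" should be read as non-injectivity of $X \mapsto X^\top X$ on any neighbourhood of a nonzero $X$, and the exhibited ambiguity should be shown to be non-trivial (different matrices, not merely relabellings) — which the explicit orthogonal orbit, together with the observation that $QX \neq X$ for some $Q$ whenever $X \neq 0$, settles immediately.
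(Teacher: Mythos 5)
Your proof is correct, and it takes a genuinely different — and in fact cleaner — route than the paper's. The paper right-multiplies by an $m\times m$ orthogonal matrix $Q$, obtaining $\tilde X = XQ$ with $\tilde X^\top \tilde X = Q^\top G Q$; this preserves $G$ only when $Q$ commutes with $G$, which is why the paper has to invoke the extra hypothesis that $G$ has repeated eigenvalues or is rank-deficient (and its parenthetical justification ``since $d>m$'' is itself shaky, since a tall $X$ generically yields a full-rank $G$ with distinct eigenvalues). You instead left-multiply by $Q\in O(d)$, so that $(QX)^\top (QX) = X^\top Q^\top Q X = G$ holds for \emph{every} orthogonal $Q$, with no spectral assumption on $G$ whatsoever; taking $Q=-I_d$ already exhibits a second preimage whenever $X\neq 0$, and for $d\ge 2$ you get an uncountable orbit. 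Your version therefore proves the non-identifiability claim unconditionally (modulo the trivial $X=0$ case, which you correctly flag as irrelevant), whereas the paper's version, read literally, only covers Gram matrices with nontrivial orthogonal commutant. The multidimensional-scaling remark is a nice framing of what $G$ does and does not reveal, and your insistence on verifying $QX\neq X$ is exactly the step the paper glosses over.
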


\begin{proof}
Suppose $X$ has rank $r \leq \min(d, m)$. Consider any orthogonal matrix $Q \in \mathbb{R}^{m\times m}$ satisfying $Q^\top Q = I$. Define a new activation matrix as $\tilde{X} = XQ$. Then:
$$
\tilde{X}^\top \tilde{X} = Q^\top X^\top X Q = Q^\top G Q.
$$
If $G$ has repeated eigenvalues or is rank-deficient (which is typically true since $d>m$), there exist infinitely many distinct matrices $\tilde{X}$ yielding the identical Gram matrix $G$. Therefore, the mapping from $X$ to $G$ is inherently many-to-one, ensuring non-invertibility and protecting against exact inversion attacks.
\end{proof}

\end{document}